\documentclass[11pt,a4]{article}
\usepackage{fullpage}

% if you need to pass options to natbib, use, e.g.:
%     \PassOptionsToPackage{numbers, compress}{natbib}
% before loading neurips_2020

% ready for submission
% \usepackage{neurips_2020}

% to compile a preprint version, e.g., for submission to arXiv, add add the
% [preprint] option:
%     \usepackage[preprint]{neurips_2020}

% to compile a camera-ready version, add the [final] option, e.g.:
%     \usepackage[final]{neurips_2020}

% to avoid loading the natbib package, add option nonatbib:
    %  \usepackage[nonatbib,preprint]{neurips_2020}

\usepackage[utf8]{inputenc} % allow utf-8 input
\usepackage[T1]{fontenc}    % use 8-bit T1 fonts
\usepackage{hyperref}       % hyperlinks
\usepackage{url}            % simple URL typesetting
\usepackage{booktabs}       % professional-quality tables
\usepackage{amsfonts}       % blackboard math symbols
\usepackage{nicefrac}       % compact symbols for 1/2, etc.
\usepackage{microtype}      % microtypography
\usepackage[toc,page]{appendix}

%my additions
\usepackage{graphicx}
\usepackage{caption}
\usepackage{subcaption}
\usepackage{amsmath,amssymb}
\usepackage{mathtools}
\usepackage{epstopdf}
\usepackage[makeroom]{cancel}
\usepackage{breqn}
\usepackage[english]{babel}
\usepackage[utf8]{inputenc}
\usepackage{algorithm}
\usepackage[noend]{algpseudocode}

\usepackage{xcolor}

\renewcommand{\b}{\boldsymbol}
\renewcommand{\a}{\alpha}
\newcommand{\R}{\mathbb{R}}

\newcommand{\eps}{\epsilon}

%%%Rob Commands 
\usepackage{dsfont,wasysym,euscript,mathrsfs,amsthm,bm}
\def\sP{\mathscr{P}}
\DeclareMathAlphabet{\pazocal}{OMS}{zplm}{m}{n}
\def\sF{\pazocal{F}}
\def\bnu{\boldsymbol{\nu}}
\def\blambda{\boldsymbol{\lambda}}
\def\brho{\boldsymbol{\rho}}
\def\bsigma{\boldsymbol{\sigma}}
\def\barmu{\bar{\mu}}
\def\rn{\mathbb{R}}
\def\sD{\pazocal{D}}
\def\sQ{\mathscr{Q}}

\def\bx{\mathbf{x}}
\def\bX{\mathbf{X}}

\def\bb{\mathbf{b}}
\def\bq{\mathbf{q}}

\def\bA{\mathbf{A}}
\def\bB{\mathbf{B}}
\def\b1{\mathbf{1}}
\def\nn{\mathbb{N}}
\def\spn{\operatorname{span}}
\def\proj{\operatorname{proj}}
\def\tbb{\widetilde{\mathbf{b}}}
\def\tS{\widetilde{S}}
\def\simiid{\overset{iid}{\sim}}

\newcommand\numberthis{\addtocounter{equation}{1}\tag{\theequation}}
\usepackage{amsmath}

\DeclareMathOperator*{\argmin}{arg\,min}

\newtheorem{theorem}{Theorem}%[section]
\newtheorem*{theorem*}{Theorem}
%[section]
\newtheorem{definition}{Definition}%[section]
%[section]
\newtheorem{lemma}{Lemma}%[section]
\newtheorem{corollary}{Corollary}%[section]

\newenvironment{manualtheorem}[1]{%
  \manualtheoreminner
}{\endmanualtheoreminner}

\title{Consistent Estimation of Identifiable Nonparametric Mixture Models from Grouped Observations}

% The \author macro works with any number of authors. There are two commands
% used to separate the names and addresses of multiple authors: \And and \AND.
%
% Using \And between authors leaves it to LaTeX to determine where to break the
% lines. Using \AND forces a line break at that point. So, if LaTeX puts 3 of 4
% authors names on the first line, and the last on the second line, try using
% \AND instead of \And before the third author name.

\author{%
  Alexander Ritchie\thanks{Equal contribution.} \\
  Department of EECS\\
  University of Michigan\\
  Ann Arbor, MI 48109 \\
  \texttt{aritch@umich.edu} \\
   \and
   Robert A. Vandermeulen\footnotemark[1] \\
   ML group \\
    Technische Universität Berlin \\
    10587 Berlin, Germany \\
    \texttt{vandermeulen@tu-berlin.de}
   \and
   Clayton Scott \\
  Departments of EECS, Statistics\\
  University of Michigan\\
  Ann Arbor, MI 48109 \\
  \texttt{clayscot@umich.edu}
}

\begin{document}
\date{}
\maketitle

\begin{abstract}
Recent research has established sufficient conditions for finite mixture models to be identifiable from grouped observations. These conditions allow the mixture components to be nonparametric and have substantial (or even total) overlap. This work proposes an algorithm that consistently estimates any identifiable mixture model from grouped observations. Our analysis leverages an oracle inequality for weighted kernel density estimators of the distribution on groups, together with a general result showing that consistent estimation of the distribution on groups implies consistent estimation of mixture components. A practical implementation is provided for paired observations, and the approach is shown to outperform existing methods, especially when mixture components overlap significantly. 
\end{abstract}

\section{Introduction}
In statistics and machine learning, finite mixture models are often used to describe the distribution of subpopulations within a larger population. A finite mixture model can be written
\begin{equation} \label{eq:mm}
p=\sum_{m=1}^{M}w^*_mp^*_m,
\end{equation}
where $w^*_m >0$ are mixing weights such that $\sum_{m=1}^M w^*_m = 1$, and $p^*_m$ are probability densities. Without additional assumptions, the mixture model $p$ is not identifiable from iid data. Typically, identifiability is ensured by restricting the $p^*_m$ to some family of parametric distributions. Restricting the $p^*_m$ to be Gaussian yields the Gaussian mixture model (GMM) which is identifiable \cite{anderson2014more, yakowitz1968identifiability}.

Most work on estimating mixture models assumes an iid sampling scheme. In this work we examine an alternative sampling scheme where observations occur in iid groups. Each group is generated by sampling a component $m\in[M]$ according to $w_m^*$, and then drawing $N$ iid observations from $p_m^*$.

Recent work has shown that \textit{any} finite mixture model is identifiable given grouped observations of sufficient size \cite{vandermeulen_operator_2019}. In the worst case, any finite mixture model with $M$ components is identifiable given groups of size $N \ge 2M-1$. It was also shown that, if the underlying components of the mixture model are \textit{jointly irreducible} \cite{blanchard2014decontamination}, then the mixture is identifiable given paired observations ($N=2$). This framework provides a setting where the potential exists to recover nonparametric and highly overlapping mixture components. As of yet, however, no general theory or algorithms are known for this estimation problem.

This work makes the following contributions. We introduce a novel variant of the kernel density estimator that yields statistically consistent estimates of \emph{any identifiable} nonparametric mixture model (NoMM) from grouped observations. To prove this result, we establish an oracle inequality for weighted kernel density estimators. We also establish a general result showing that consistent estimation (with an estimator possessing a natural factored form) of the distribution on groups implies consistent estimates of the underlying components when the NoMM is identifiable. The only additional condition imposed by our theory is that the $p_m^*$ be square integrable. In the case of $N=2$, we offer an efficient algorithm and demonstrate its effectiveness on several datasets. 

We study two applications where paired observations naturally arise. The first is nuclear source detection, where nuclear particles interact with a detector to produce some form of measurement. A critical challenge in this application is to classify incoming particles as belonging to source or background. Because of changing environments, training data are typically not available, and these two classes also have substantial overlap. By positioning two detectors side-by-side, it is possible to simultaneously measure two particles from the same (unknown) class. 

We also apply our method to topic modeling of Twitter data. Since tweets usually express a small set of very closely related ideas, words in tweets contain common underlying semantic information. The pairing of words has the potential to encode this semantic information in a way that accounts for context. The proposed method, which operates on continuous word embeddings, allows for flexible modeling of the distributions of topics over words using static word embeddings \cite{pennington2014glove}. Furthermore our method does not require anchor words, allows for substantial overlap of topics without loss of identifiability, and can be trained using documents with as few as two words \emph{without} any document aggregation \cite{gao2019incorporating, qiang2017topic}. While other works have explored topic modeling with word embeddings, which we call continuous topic modeling, most either impose parametric assumptions or are not suited for very short texts. To our knowledge, this is the first work to consider a nonparametric approach to continuous topic modeling of very short texts.

\section{Background and Previous Work}

Much of the literature concerning NoMMs falls in the category of Bayesian nonparametrics, a thorough summary of which is given in \cite{xuan2019survey}. Typically, mixture models in this setting do not assume that the number of mixture components is known, and instead assume that the mixture components are from a known parametric family of distributions. An in-depth treatment of Bayesian NoMMs (BNoMMs) can be found in \cite{lindsay1995mixture}. The parametric assumptions on the mixture components have been relaxed in \cite{aragam2018identifiability}, but the identifiability results impose regularity and separation conditions on the components. We mention BNoMMs only for completeness and emphasize that our work considers an alternative setting where the number of mixture components is known, but few to no assumptions are made on the mixture components themselves.

Mixture models are often utilized to solve the clustering problem. Parametric mixture models, such as GMMs, are able to capture overlapping clusters. 
%Though the emphasis of this work is on provable recovery of mixture components, we mention some related work in clustering for completeness. 
Most clustering algorithms, however, such as $k$-means \cite{forgy1965cluster, lloyd1982least}, DBSCAN \cite{ester1996density}, and spectral clustering \cite{malik00spectral, luxburg07tutorial}, assume clusters are non-overlapping and hence fail when clusters overlap. The grouped observation setting considered in this work is known in the clustering literature as \emph{clustering with instance-level constraints} \cite{wagstaff2000clustering, wagstaff2001constrained, wagstaff2006constrained}. A survey of constrained clustering is given in \cite{ganccarski2020constrained}. Grouped observations correspond to so-called must-link constraints, where two or more observations are known, through expert knowledge or some other means, to belong to the same cluster. Most constrained clustering approaches cannot model overlapping clusters effectively \cite{scripps2010constrained}.

%Usually, these methods consider cannot-link constraints as well, helping model separation between clusters. Most constrained clustering approaches cannot model overlapping clusters effectively \cite{scripps2010constrained}.

There is relatively little work on mixture modeling with nonparametric components, and to our knowledge no prior work addresses the incorporation of instance-level constraints in the NoMM setting. Mallapragada et al. \cite{mallapragada2010non} use a mixture of kernel density estimators to estimate a NoMM, but do not address identifiability or provide statistical guarantees. Aragam et al. \cite{aragam2018identifiability} prove identifiability of NoMMs under regularity and separation conditions on the components. They provide a simple algorithm that gives Bayes optimal cluster assignments in the limit, but they does not guarantee on recovery of the underlying components. Schiebinger et al. \cite{schiebinger2015geometry} study kernelized spectral clustering and characterize recoverability of components with small overlap. Zheng and Wu \cite{zheng2019nonparametric} establish consistent estimation of NoMMs under the assumption that mixture components have independent marginals. Bao et al. \cite{bao2018classification} consider the related problem of ``similar-unsupervised'' binary classification, which assumes access to unlabeled data in addition to must-link constraints.
%Karunamuni and Wu \cite{karunamuni2009minimum} explore the problem of two-component mixture proportion estimation in the nonparametric setting, giving a maximum likelihood estimator with desirable asymptotic properties. However, they do not explore the estimation of the components themselves.

In the grouped observation setting, previous works on multi-view models can be adapted to prove identifiability results and give algorithms to recover mixture model components. When the mixture components are linearly independent it has been shown that three observations per group is sufficient to yield identifiability as well as an algorithm to provably recover the components \cite{anandkumar14,allman09}. We note that these approaches require three observations per group, while the proposed method works with as few as two observations per group. This difference amounts to performing kernel density estimation in three times the ambient dimension versus two. With the instability of KDEs in high dimension, the reduction to groups of size two can be very meaningful in practice. Furthermore, in applications like nuclear particle classification, triples may be exceedingly rare or difficult to measure. For discrete data, similar results from nonnegative matrix factorization exist under joint irreducibility with two observations per group \cite{arora12}, and algorithms have been proposed to recover arbitrary mixture models with $M$ components given $2M-1$ observations per group \cite{rabani14,vandermeulen_operator_2019}.

\section{Notation}
For $1 \le p < \infty$ denote $\left \lVert f\right \rVert_p :=(\int_{\R^d} |f(x)|^p dx)^{1/p}$, and $L^p:= \{f:\R^d \to \R : \| f \|_p < \infty\}$.
%The notation $\left \lVert f\right \rVert_2$ will be used for the 2-norm of $f \in L^{2}$. 
The transpose of a matrix $A$ will be written $A'$. Random variables will be referred to by capital letters, and instances of random variables will be referred to by the corresponding lowercase letter. 
%Throughout this work we will let $k_\sigma$ be a kernel function with bandwidth $\sigma$.
We represent the set of positive integers $\{1,2,\dots,M\}$ by $[M]$. We let $\Delta^R$ be the probability simplex in $\mathbb{R}^R$. We denote the $M$-fold Cartesian product of a set with a subscript, e.g., $\Delta^R_M = \underbrace{\Delta^R \times \dots \times \Delta^R}_{M}$.

\section{Problem Statement}
We precisely introduce the grouped observation setting, review known identifiability results, and formalize the estimation problem. The standard sampling procedure for a mixture model of the form $p=\sum_{m=1}^{M}w^*_mp^*_m$ can be viewed as a two step process wherein one samples a mixture component $p_m^*$ with probability $w_m^*$ and then observes one draw from that distribution $X\sim p_m^*$. The \emph{grouped observation setting} considers an alternative sampling scheme where, after selecting a mixture component $p_m^*$, instead of only drawing a single observation, a group of observations $\bX = (X_1, \ldots, X_N)$ are drawn iid from $p_m^*$. As in a standard mixture model, one does not know \emph{a priori} from which mixture component a grouped observation is sampled. Repeating this $n$ times, one's data consists of $n$ groups of $N$ observations per group $\bX_1 = \left(X_{1,1},\ldots,X_{1,N} \right), \ldots, \bX_n = \left(X_{n,1},\ldots,X_{n,N} \right)$. The distribution on groups is $\bX \stackrel{iid}{\sim} 
\sum_{m=1}^M w_m^* {p_m^*}^{\times N}$,
where ${p_m^*}^{\times N}: \R^d_N \to \R$ denotes the product density such that ${p_m^*}^{\times N}(y_1, y_2, \dots, y_N) = p_m^*(y_1)p_m^*(y_2)\dots p_m^*(y_N)$. Note that when $N=1$ this is simply a standard mixture model.

Vandermeulen and Scott \cite{vandermeulen_operator_2019} characterized identifiability from grouped observations for mixtures of general probability measures. A mixture model $p = \sum_{m=1}^M w_m^* p_m^*$ is said to be \emph{$N$-identifiable} if $p$ cannot be expressed 
$p=\sum_{m=1}^{M'} w_m' p_m'$ for some distinct mixture model such $M' \le M$ and $\sum_{m=1}^M w_m^* {p_m^*}^{\times N} = \sum_{m=1}^{M'} w_m' {p_m'}^{\times N}$. In words, $N$-identifiability of $p$ means there is no other mixture model with $M$ or fewer components that induces the same distribution on groups. They show that a general mixture model is $N$-identifiable from grouped observations provided $N \ge 2M-1$, and that this cannot be improved without imposing restrictions on the components. The result places no assumptions whatsoever on the components.

In practice, the bound of $2M-1$ is probably pessimistic, and the most useful cases are likely when $N$ is small, say two or three. The authors of \cite{vandermeulen_operator_2019} also show that if the $p_m^*$ are \emph{jointly irreducible} (\emph{linearly independent}), then the mixture is $N$-identifiable for $N=2$ ($N=3$). A collection of probability densities $\mu_1,\mu_2,\dots,\mu_M$ is said to be \emph{jointly irreducible} (JI) if $\sum_{m=1}^M c_m\mu_m$ is never a valid density whenever some $c_m < 0$. JI is satisfied, for example, if the support of each mixture component has some subset of positive measure that does not intersect the supports of the other mixture components (a continuous analogue of the anchor word assumption). This is not necessary, however; JI is still possible if all densities have the same support. In the remainder of the paper we focus on the setting of $N=2$, not only because JI provides a flexible nonparametric condition where paired observations suffice, but also because the notation for our estimator becomes cumbersome when $N > 2$. Our theory generalizes easily to $N > 2$, and these details are described in Appendices \ref{a:t1}, \ref{a:t2}, and \ref{a:t3}.

The paired observations $\bX_1,\dots,\bX_n$ with $\bX_i=(X_{i,1},X_{i,2}) \in \mathbb{R}^d \times \mathbb{R}^d$ are iid and have density 
\begin{equation}
q(x,x') \coloneqq \sum_{m=1}^M w^*_m p^*_m(x) p^*_m(x') \quad x,x' \in \R^d.\label{q}
\end{equation}
We assume $M$ is known. Our goal is to consistently estimate $w_m^*$ and $p_m^*$  when $p$ is identifiable. 

\section{A Weighted Kernel Density Estimator}

Our overall strategy is to first devise a consistent estimator of $q$, the density on pairs, where the estimator has a factorized form reflecting the group sampling scheme. In the next section we prove that if an estimator for $q$ is consistent, and $p$ is identifiable, then the components comprising our estimator converge to the true components.

Let $k: \R^d \to \R$ be a function, called a \emph{kernel}, such that $k \ge 0$ and $\int k(x) dx = 1$. An example is the Gaussian kernel $k(x) = (2 \pi)^{-1/2} \exp(-\|x\|^2/2)$. For $\sigma > 0$, define $k_\sigma(x,x'):= \sigma^{-d} k((x - x')/\sigma)$. We refer to the second argument of $k_\sigma$ as the \emph{center} of the kernel. A \emph{weighted kernel density estimator} (wKDE) for a density on $\R^d$, but constructed from the paired observations $\bX_i$, has the form
\begin{equation*}
    p(x;\theta) = \sum_{r=1}^n \sum_{r'=1}^2 \theta_{r,r'} k_\sigma(x,X_{r,r'}), \label{p}
\end{equation*}
where $\theta_{r,r'}$ is the element of $\theta = [\theta_{1,1}, \theta_{1,2}, \dots, \theta_{n,1}, \theta_{n,2}]'$ corresponding to the weight of the kernel centered at $X_{r,r'}$. We propose to model the mixture components as wKDEs. Specifically, given $n$ paired observations, we consider estimators of $q$ of the form
\begin{equation}
   q_{w,\a}(x,x') = \sum_{m=1}^M w_m p(x;\a_m) p(x';\a_m),  \label{qhat}
\end{equation}
where $w = [w_1, w_2, \dots, w_M]' \in \Delta^{M}$, $\a_m = [\a_{m,1,1}, \a_{m,1,2}, \dots, \a_{m,n,1}, \a_{m,n,2}]' \in \Delta^{2n}$ for all $m \in [M]$, with $\a_{m,r,r'}$ corresponding to the weight of the kernel centered at $X_{r,r'}$ in the estimate of the $m^{th}$ mixture component, and $\a \coloneqq (\a_1, \ \a_2, \dots, \ \a_M) \in \Delta_M^{2n}$.

To select the parameters $(w,\a)$, we propose to minimize the integrated square error (ISE) of $q_{w,\a}$ given by$\left \lVert q - q_{w,a}\right \rVert_2^2 \coloneqq \int [ q(x,x')-q_{w,a}(x,x')]^2 dx  dx'$.
Expanding the ISE gives
\begin{align*}
  \left \lVert q - q_{w,a}\right \rVert_2^2 = &\int q_{w,\a}^2(x,x') dx dx' - 2 \int q_{w,\a} (x,x')q(x,x')dx dx' + \cancelto{\text{const.}}{\int q^2(x,x')dx dx'}.
\end{align*}
Since the final term is constant with respect to $w$ and $\a$, we focus on minimizing the first two terms which we call the truncated ISE (TISE) and denote by $J(w,\a)$. Substituting the definition of $q_{w,\a}$ in the TISE yields
\begin{align}
     J(w,\a) \coloneqq \int q_{w,\a}^2(x,x') dx dx' 
    -2\sum_{m=1}^M \sum_{r=1}^n \sum_{r'=1}^2 \sum_{s=1}^n \sum_{s'=1}^2 w_m  \a_{m,r,r'} \a_{m,s,s'} h(r,r',s,s'), \label{tise}
\end{align}
where $h(r,r',s,s') \coloneqq \int k_\sigma(x,X_{r,r'})k_\sigma(x',X_{s,s'})q(x,x')dxdx'$. Since $q$ is unknown, the ISE and therefore $J(w,a)$ cannot be calculated directly. Noting that $h(r,r',s,s')$ is an expectation, we estimate this term using a hybrid leave-one-out/leave-two-out (LOO/LTO) estimator
\begin{align*}
    \hat h(r,r',s,s') &\coloneqq \begin{cases}
     \frac{1}{n-2} \sum_{i \in [n] \setminus \{r,s\}}k_\sigma(X_{i,1},X_{r,r'})k_\sigma(X_{i,2},X_{s,s'}) & r \neq s \\
    \frac{1}{n-1} \sum_{i \in [n] \setminus \{r\}}k_\sigma(X_{i,1},X_{r,r'})k_\sigma(X_{i,2},X_{s,s'}) & r=s
    \end{cases}.
\end{align*}

In this manner we have the empirical TISE
\begin{align}
    \hat J(w,\a) &\coloneqq \int q_{w,\a}^2(x,x') dx dx' 
    -2\sum_{m=1}^M \sum_{r=1}^n \sum_{r'=1}^2 \sum_{s=1}^n \sum_{s'=1}^2 w_m  \a_{m,r,r'} \a_{m,s,s'} \hat h(r,r',s,s').
\end{align}
With all the notation in place, our estimate of the nonparametric mixture model is determined by
\begin{equation}
    (\hat w, \hat \a) \coloneqq \ \argmin_{w \in \Delta^M, \ \a \in \Delta_M^{2n}} \ \hat J(w,\a), \label{eq:generic_problem}
\end{equation}
where $\hat w_m$ are the mixing weights and $p(x;\hat\a_m)$ are the mixture components for $m \in \left[M\right]$. The theoretical results presented in Section \ref{sec:theory} concern the behavior of the minimizer of \eqref{eq:generic_problem}. We show not only that the empirical TISE minimizing estimator $\hat q \coloneqq q_{\hat w, \hat \a}$ consistently estimates $q$, but its components also consistently estimate the underlying mixture model if it is identifiable.

\section{Theoretical Results} \label{sec:theory}
In this section we state our assumptions and main results. Formal proofs are given in the Appendices \ref{a:t1}, \ref{a:t2}, and \ref{a:t3}. Our overall approach is to first show that the proposed $\hat{q}$ is a consistent estimate of $q$ (Theorems \ref{oe} and \ref{l3}). We then show that if $p$ in \eqref{eq:mm} is identifiable, then the components $p(x;\hat{\a}_m)$ defining $\hat q$ are consistent estimates of $p_m^*$, as are the $\hat{w}_m$ for $w_m^*$ (Theorem \ref{thm:compconv-maintext}).

We assume throughout this section that $p_m^* \in L^2$ for all $m$. We also require that the kernel $k$ satisfy two additional conditions: $k \in L^2$ and $k \le C_k$ for some constant $C_k < \infty$.

We begin with an oracle inequality, which shows that our estimator selects an approximately optimal member of our model class.
\begin{theorem} \label{oe} 
Let $\epsilon>0$ and set $\delta = 8(n^2-n)\exp\{-\frac{\sigma^{4d}(n-2)\epsilon^2}{8C_k^4}\} + 8n\exp\{-\frac{\sigma^{4d}(n-1)\epsilon^2}{8C_k^4}\}$. With probability at least $1-\delta$ the following holds: $\left \lVert q - q_{\hat w, \hat \a} \right \rVert_2^2 \leq \  \inf_{w \in \Delta^M, \ \a \in \Delta_M^{2n}} \left \lVert q - q_{w, \a} \right \rVert_2^2 + \epsilon.$
\end{theorem}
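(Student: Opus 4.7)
The plan is to recognize this as a standard empirical risk minimization oracle inequality. Since $\|q - q_{w,\a}\|_2^2 = J(w,\a) + \|q\|_2^2$ where the second term does not depend on $(w,\a)$, the stated inequality is equivalent to
\[
J(\hat w, \hat \a) \leq \inf_{w \in \Delta^M,\ \a \in \Delta_M^{2n}} J(w,\a) + \epsilon.
\]
By the usual two-step decomposition (add and subtract $\hat J(\hat w,\hat \a)$ and $\hat J(w,\a)$, then use that $(\hat w, \hat \a)$ minimizes $\hat J$), this reduces to showing that $\sup_{w,\a}|J(w,\a) - \hat J(w,\a)| \leq \epsilon/2$ on a high-probability event.

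The key observation is that the simplex constraints collapse the uniform deviation onto a finite collection of statistics. Indeed, from the definitions of $J$ and $\hat J$,
\[
J(w,\a) - \hat J(w,\a) = 2 \sum_{m=1}^M w_m \sum_{r,r',s,s'} \a_{m,r,r'} \a_{m,s,s'}\bigl(\hat h(r,r',s,s') - h(r,r',s,s')\bigr),
\]
and since $\sum_m w_m = 1$ and $\sum_{r,r'} \a_{m,r,r'} = 1$ for every $m$, the absolute value is bounded by $2 \max_{r,r',s,s'}|\hat h(r,r',s,s') - h(r,r',s,s')|$. Thus it suffices to show that each of these $4n^2$ deviations is at most $\epsilon/4$ with the stated probability.

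For the concentration step, fix $(r,r',s,s')$ with $r \ne s$ and condition on the two groups $\bX_r$ and $\bX_s$. Then $h(r,r',s,s')$ becomes deterministic, and $\hat h(r,r',s,s')$ is an average of $n-2$ conditionally i.i.d.\ terms $k_\sigma(X_{i,1},X_{r,r'})k_\sigma(X_{i,2},X_{s,s'})$, each bounded by $(C_k \sigma^{-d})^2 = C_k^2 \sigma^{-2d}$ thanks to the assumption $k \le C_k$. Conditional Hoeffding then yields
\[
P\bigl(|\hat h - h| > \epsilon/4 \mid \bX_r, \bX_s \bigr) \leq 2\exp\!\bigl\{-\sigma^{4d}(n-2)\epsilon^2/(8C_k^4)\bigr\},
\]
and since the bound is uniform in the conditioning, the same bound holds unconditionally. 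The diagonal case $r = s$ is identical with $n-1$ in place of $n-2$. A union bound over the $4n(n-1)$ off-diagonal and $4n$ diagonal index tuples assembles the claimed $\delta$.

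The only genuinely subtle point is the one mentioned above: $h(r,r',s,s')$ is itself a random quantity because $X_{r,r'}$ and $X_{s,s'}$ are data. Handling this properly requires conditioning on the appropriate groups before invoking Hoeffding, so that the remaining $n-2$ (or $n-1$) summands are independent with mean exactly $h$. Once this conditioning is set up correctly, the rest of the argument is a routine boundedness/Hoeffding/union-bound computation, and matching the constants in $\delta$ amounts to choosing the threshold $\epsilon/4$ for the individual deviations so that $2\cdot 2 \max|\hat h - h| \le \epsilon$.
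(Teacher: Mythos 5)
Your proposal is correct and follows essentially the same route as the paper's proof: reduce the ISE oracle inequality to the TISE via the constant $\|q\|_2^2$, use the standard ERM decomposition, collapse the uniform deviation to $\max_{r,r',s,s'}|\hat h - h|$ via the simplex constraints, and apply Hoeffding with the bound $C_k^2\sigma^{-2d}$ plus a union bound over the $4n(n-1)$ off-diagonal and $4n$ diagonal tuples, matching the constants exactly. Your explicit conditioning on $\bX_r,\bX_s$ to make $h$ deterministic before invoking Hoeffding is a slightly more careful formalization of the independence claim the paper states directly (this is precisely what the LOO/LTO construction is for), but it is the same argument.
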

\begin{proof}[Proof Sketch] The estimators $\hat{h}$ are constructed so that they are sums of independent random variables, allowing us to apply Hoeffding's inequality to show that each $\hat{h}$ concentrates around its $h$. Then using basic inequalities (triangle inequality, union bound) and the simplex constraints on $w$ and $\a$, we show that $\hat{J}(w,\a)$ concentrates around $J(w,\a)$ uniformly over the parameter space.
\end{proof}
The next result uses Theorem \ref{oe} to establish that $\hat{q}$ is a consistent estimate of $q$ in the $L^1$ norm.
\begin{theorem}\label{l3}
If $\sigma \to 0$ and $\frac{n\sigma^{4d}}{\log n} \to \infty$ as $n \to \infty$, then $\left \lVert q - q_{\hat w, \hat \a}\right \rVert_1 \xrightarrow{a.s.} 0$.
\end{theorem}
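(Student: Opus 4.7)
The plan is to combine Theorem \ref{oe} with (i) a bound on the approximation error realized by an oracle parameter choice and (ii) a conversion from $L^2$ to $L^1$ consistency.

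The first step is to construct an oracle. Let $Z_1, \ldots, Z_n$ denote the (unobserved) latent component labels generating the pairs $\bX_1, \ldots, \bX_n$, and let $n_m = |\{i : Z_i = m\}|$. By the strong law of large numbers, $n_m/n \to w_m^*$ a.s., so in particular $n_m > 0$ for all $m$ eventually a.s. I would set $w^o_m = n_m/n$ and $\a^o_{m,r,r'} = \mathbf{1}\{Z_r = m\}/(2n_m)$; these lie in the feasible set $\Delta^M \times \Delta_M^{2n}$, and $p(\cdot\,;\a^o_m)$ is an ordinary unweighted KDE of $p_m^*$ built from the $2n_m$ iid observations drawn from $p_m^*$. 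Under the bandwidth rate $n\sigma^{4d}/\log n \to \infty$ (which is far stronger than what the classical bias/variance decomposition needs) together with the assumption $p_m^* \in L^2$, a MISE bound combined with a McDiarmid-type exponential concentration for the random variable $\|p(\cdot\,;\a^o_m) - p_m^*\|_2^2$ yields $\|p(\cdot\,;\a^o_m) - p_m^*\|_2 \to 0$ almost surely. Using the tensor-product inequality $\|f\otimes f - g\otimes g\|_2 \le \sqrt{2}\,(\|f\|_2 + \|g\|_2)\,\|f-g\|_2$, together with $|w^o_m - w_m^*| \to 0$ a.s. and the triangle inequality across the $M$ components, I would then conclude $\|q - q_{w^o,\a^o}\|_2 \to 0$ a.s., hence $\inf_{w,\a} \|q - q_{w,\a}\|_2 \to 0$ a.s.

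Next I would apply Theorem \ref{oe} with $\epsilon_n \to 0$ chosen to make $\delta_n$ summable. The dominant term in $\delta_n$ behaves like $n^2\exp\{-c\sigma^{4d} n \epsilon_n^2\}$, so taking $\epsilon_n^2 = C\log n/(n\sigma^{4d})$ gives both $\epsilon_n \to 0$ (by the hypothesis) and $\sum_n \delta_n < \infty$. Borel--Cantelli then implies, a.s. for all sufficiently large $n$, $\|q - q_{\hat w,\hat\a}\|_2^2 \le \|q - q_{w^o,\a^o}\|_2^2 + \epsilon_n$, and combining with the previous step gives $\|q - q_{\hat w,\hat\a}\|_2 \to 0$ a.s.

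Finally I would pass from $L^2$ to $L^1$ using tightness of $q$. For any ball $B_R \subset \R^{2d}$, Cauchy--Schwarz gives $\int_{B_R}|q_{\hat w,\hat\a} - q| \le |B_R|^{1/2}\,\|q_{\hat w,\hat\a} - q\|_2 \to 0$ a.s., while since both $q_{\hat w,\hat\a}$ and $q$ are probability densities, $\int_{B_R^c}|q_{\hat w,\hat\a} - q| \le 2(1 - Q(B_R)) + |\int_{B_R}(q_{\hat w,\hat\a} - q)|$. Hence $\limsup_n \|q - q_{\hat w,\hat\a}\|_1 \le 2(1 - Q(B_R))$ a.s. for every fixed $R$, and letting $R \to \infty$ (using $Q(B_R) \to 1$, as $q$ is a probability density) yields $\|q - q_{\hat w,\hat\a}\|_1 \to 0$ a.s. The main obstacle is the oracle-approximation step: upgrading classical MISE consistency of the unweighted KDE (which only delivers convergence in expectation, and hence in probability via Markov) to almost sure $L^2$ convergence. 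This upgrade is precisely why the theorem assumes the stringent bandwidth condition $n\sigma^{4d}/\log n \to \infty$ rather than the much weaker $n\sigma^d \to \infty$ that suffices for $L^1$ consistency of a single KDE.
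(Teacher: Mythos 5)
Your proposal is correct and follows essentially the same route as the paper: the oracle inequality of Theorem \ref{oe} with Borel--Cantelli handles the estimation error, an oracle assignment of the $\a$-weights (uniform over the observations drawn from $p_m^*$) reduces the approximation error to consistency of ordinary KDEs combined with the product inequality of Lemma \ref{l2}, and the $L^2$-to-$L^1$ upgrade uses the fact that $q_{\hat w,\hat\a}$ integrates to one. The only differences are cosmetic: the paper cites Gy\"orfi et al.\ for the almost-sure $L^2$ consistency of the unweighted KDE and for the $L^2$-to-$L^1$ conversion, whereas you re-derive both in place (via MISE plus McDiarmid, and via tightness on balls, respectively), and you take $w^o_m = n_m/n$ where the paper takes $w^* $.
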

\begin{proof}[Proof Sketch] We appeal to a result of \cite{gyorfi90} showing that if $\int \hat{q} = 1$, which it does in our case, then strong consistency (i.e., a.s. convergence) of a density estimator in $L^2$ implies strong consistency in $L^1$. To show strong consistency in $L^2$, from Theorem \ref{oe} it suffices to exhibit $w \in \Delta^M$ and $\a \in \Delta^{2n}_M$ such that $\left \lVert q - q_{w, \a}\right \rVert_1 \xrightarrow{a.s.} 0.$ For this we take $w = w^*$ and $\a = \a^*$ such that each $\a_m^*$ is uniform on the data points drawn from $p_m^*$. This makes $p(\cdot \, ; \a_m^*)$ the usual (uniformly weighted) KDE for $p_m^*$, which is known to be a strongly consistent estimator. Strong consistency of $\hat{q}$ then easily follows.
\end{proof}
The preceding results hold regardless of whether $p$ in \eqref{eq:mm} is identifiable. The next result states that if $p$ is identifiable, then the estimates $p(\cdot \, ; \hat \a_m)$ comprising $\hat q$ are consistent estimates of the true components $p_m^*$, as are the $\hat w_m$ of $w_m^*$. The result is stated for $N \ge 2$.

%We have the following theorem which states that consistently estimating $q$ via $q_{\hat{w},\hat{\a}}$ will cause the components to $p(x;\a_m)$
\begin{theorem}
\label{thm:compconv-maintext}
  Let $\sum_{m=1}^M w_m p_m$ be an $N$-identifiable mixture model, and $\sum_{m=1}^M \hat{w}_{m,j} \hat{p}_{m,j}$ be a sequence of mixture models such that $\left\|\sum_{m=1}^M \hat{w}_{m,j} \hat{p}_{m,j}^{\times N} - \sum_{m=1}^M w_m p_m^{\times N} \right\|_1 \to 0$. Then there is a sequence of permutations $\sigma_j$ so that $\hat{w}_{\sigma_j(m),j} \to w_m$ and $\left\|\hat{p}_{\sigma_j(m),j} - p_m\right\|_1 \to 0$ for all $m$.
\end{theorem}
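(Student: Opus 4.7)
The plan is a subsequence argument combined with the identifiability hypothesis. First, by compactness of $\Delta^M$, every subsequence of the weights $(\hat w_{m,j})_m$ has a further subsequence converging to some $\hat w\in\Delta^M$. To extract a convergent subsequence of components, I would use the pointwise bound
\[
\hat w_{m,j}\,\hat p_{m,j}(x_1) \;=\; \int \hat w_{m,j}\,\hat p_{m,j}^{\times N}(x_1,x_2,\ldots,x_N)\,dx_2\cdots dx_N \;\le\; g_j(x_1),
\]
where $g_j$ is the one-coordinate marginal of the mixture $\sum_\ell \hat w_{\ell,j}\hat p_{\ell,j}^{\times N}$. Since the joint converges to $q$ in $L^1(\R^{dN})$, the marginals $g_j$ converge to $\sum_m w_m p_m$ in $L^1(\R^d)$ and are therefore tight and uniformly integrable. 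Provided $\hat w_m>0$, this transfers to tightness of $\{\hat p_{m,j}\}_j$, and Prokhorov's theorem yields a further subsequence along which $\hat p_{m,j}\rightharpoonup\tilde\mu_m$ weakly as probability measures.

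Next, weak convergence of marginals lifts, via Fubini, to $\hat p_{m,j}^{\times N}\rightharpoonup\tilde\mu_m^{\times N}$, so passing to the limit in the mixture identity gives $\sum_m \hat w_m \tilde\mu_m^{\times N}=q$ as measures on $\R^{dN}$. Since $q$ is absolutely continuous, each $\tilde\mu_m^{\times N}$ with $\hat w_m>0$ is absolutely continuous, hence so is $\tilde\mu_m$ with some density $\tilde p_m$. If any $\hat w_m=0$, the corresponding term drops out of the limit and $q$ would be represented as a mixture of strictly fewer than $M$ components, contradicting $N$-identifiability; thus all $\hat w_m>0$. Applying $N$-identifiability to $\sum_m \hat w_m \tilde p_m^{\times N}=\sum_m w_m p_m^{\times N}$ then yields a permutation $\sigma$ with $\hat w_m=w_{\sigma(m)}$ and $\tilde p_m=p_{\sigma(m)}$.

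The main obstacle is upgrading this subsequential weak convergence of $\hat p_{m,j}$ to the $L^1$ convergence claimed in the theorem. A useful reduction comes from integrating out $(x_2,\ldots,x_N)$ in the product difference, which gives
\[
\|\hat p_{m,j}-p_{\sigma(m)}\|_1 \;\le\; \|\hat p_{m,j}^{\times N}-p_{\sigma(m)}^{\times N}\|_1,
\]
so it suffices to prove $L^1$ convergence of each individual product. This does not follow from weak convergence alone since $L^1$ lacks the Radon--Riesz property. The extra structure is that $\sum_m \hat w_m(\hat p_{m,j}^{\times N}-p_{\sigma(m)}^{\times N})\to 0$ strongly in $L^1$, with each summand uniformly integrable via the $q_j$-domination. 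Writing $\hat p_{m,j}=p_{\sigma(m)}+h_{m,j}$ and expanding, the linear-in-$h_{m,j}$ contributions are tensor products coupling the distinct components $p_{\sigma(m)}$, and $N$-identifiability must be invoked to rule out cancellations among these terms and force $\|h_{m,j}\|_1\to 0$. Making this cancellation analysis rigorous in full generality is where the bulk of the technical work lies.

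Finally, every subsequence of $(\hat w_j,\hat p_j)$ contains a further subsequence along which, after some permutation, the weights and components converge to $(w,p)$ in $\R^M\times L^1(\R^d)^M$. Since this limit is unique modulo permutation by identifiability, the full sequence converges in the desired sense, producing the permutations $\sigma_j$ in the statement.
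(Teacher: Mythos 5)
Your high-level skeleton (compactness of the weight simplex, extraction of subsequential limits of the components, passage to the limit in the mixture identity, and a contradiction with $N$-identifiability to pin down the limit and upgrade subsequential to full convergence) matches the structure of the paper's argument. However, there is a genuine gap at exactly the point you flag yourself: upgrading the subsequential \emph{weak} convergence $\hat p_{m,j}\rightharpoonup\tilde\mu_m$ obtained from Prokhorov to the \emph{strong} $L^1$ convergence the theorem asserts. Your proposed fix --- writing $\hat p_{m,j}=p_{\sigma(m)}+h_{m,j}$, expanding the tensor products, and ``invoking $N$-identifiability to rule out cancellations among the linear-in-$h$ terms'' --- does not work as stated: the $h_{m,j}$ are only weakly small, the expansion contains terms of every order in $h_{m,j}$ with no a priori size control, and $N$-identifiability is a statement about exact equality of mixtures, not a quantitative non-cancellation estimate, so it cannot be applied directly to an asymptotic expansion. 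Since $L^1$ lacks the Radon--Riesz property, as you note, weak convergence plus convergence of the mixture gives nothing further without additional structure.

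The paper supplies that structure differently, and this is the step your proposal is missing. Setting $\barmu=\sum_k a_k\mu_k$, it first removes the singular parts of the $\hat p_{m,j}$ relative to $\barmu$ (showing they vanish asymptotically), then truncates the Radon--Nikodym derivatives to obtain densities $\bq''_m$ that are uniformly bounded in $L^\infty(\barmu)$ and hence in $L^2(\barmu)$. The key computation is then an inner product of the tensor square $\sum_j b_j{\bq''_j}^{\otimes 2}-\sum_k a_k p_k^{\otimes 2}$ against $\proj_{S^\perp}(\bq''_\ell)^{\otimes 2}$, where $S=\spn\{p_1,\ldots,p_m\}$; positivity of the resulting terms forces $\|\proj_{S^\perp}(\bq''_\ell)\|_2\to 0$. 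Thus the candidate densities asymptotically live in the \emph{fixed finite-dimensional} subspace $S$, where Bolzano--Weierstrass yields a norm-convergent subsequence, and H\"older converts $L^2(\barmu)$ convergence to total-variation convergence. Without this (or an equivalent compactness device), your argument establishes only weak subsequential convergence and cannot close. A secondary, smaller omission: in the final ``uniqueness of the limit modulo permutation'' step you need to handle repeated mixing weights (the paper works with the stabilizer subgroup of the weight vector), since otherwise the permutation matching the weights need not be the one matching the components.
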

\begin{proof}[Proof Sketch]
    We show that if $\left\|\sum_{m=1}^M \hat{w}_{m,j} \hat{p}_{m,j}^{\times N} - \sum_{m=1}^M w_m p_m^{\times N} \right\|_1 \to 0$
    then the components $\hat{p}_{m,j}$ admit some convergent subsequence, and therefore so do $\hat{p}_{m,j}^{\times N}$. If a subsequence $\hat{p}_{m,j}^{\times N}$ stays away from the components $p_m^{\times N}$ then some subsequence would converge to a component other than some $p_m^{\times N}$. This allows us to construct a mixture model violating $N$-identifiability, a contradiction.
\end{proof}
This result has been stated in terms of densities for readability, but Appendix \ref{a:t3} contains a general measure-theoretic version. We may combine Theorems \ref{l3} and \ref{thm:compconv-maintext} to establish the following (returning to the setting of $N=2$). To our knowledge, this is the first result to establish consistent estimation, under any sampling scheme, of NoMMs with substantial overlap. % Skadoosh.
\begin{corollary}\label{cor:main}
If $\sigma \to 0$ and $\frac{n\sigma^{4d}}{\log n} \to \infty$ as $n \to \infty$, and 
$p$ is $2$-identifiable (e.g., the $p_m^*$ are jointly irreducible), then $\hat w_m \stackrel{a.s.}{\to} w_m^*$ and $\|p(\cdot; \hat \alpha_m) - p_m^*\|_1 \stackrel{a.s.}{\to} 0$, up to a permutation.
\end{corollary}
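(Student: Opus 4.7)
The plan is to combine Theorem \ref{l3} with Theorem \ref{thm:compconv-maintext} in a pathwise fashion. First I would verify that our estimator fits the template of Theorem \ref{thm:compconv-maintext}. By construction,
\[
q_{\hat w, \hat \a}(x,x') \;=\; \sum_{m=1}^M \hat w_m\, p(x;\hat\a_m)\, p(x';\hat\a_m),
\]
and for each $m$ the function $\hat p_m := p(\cdot;\hat\a_m)$ is a bona fide probability density on $\R^d$: it is a convex combination (with weights $\hat\a_m \in \Delta^{2n}$) of the translated kernels $k_\sigma(\cdot, X_{r,r'})$, each of which is nonnegative and integrates to one. Similarly $\hat w \in \Delta^M$. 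Hence $q_{\hat w, \hat \a}$ takes exactly the form $\sum_m \hat w_m \hat p_m^{\times 2}$ required as the hypothesis of Theorem \ref{thm:compconv-maintext} with $N=2$.

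Next I would apply Theorem \ref{l3}: the assumptions on $\sigma$ and $n$ are exactly those under which Theorem \ref{l3} yields
\[
\bigl\| q - q_{\hat w, \hat \a}\bigr\|_1 \;\xrightarrow{a.s.}\; 0,
\]
i.e., $\left\| \sum_m \hat w_m \hat p_m^{\times 2} - \sum_m w_m^* (p_m^*)^{\times 2}\right\|_1 \to 0$ almost surely along the sample sequence.

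Finally I would pass to the identifiability step pathwise. Let $\Omega_0$ be the probability-one event on which the $L^1$ convergence above holds. Fix any sample path $\omega \in \Omega_0$; then the (deterministic) sequence of mixture models $\sum_m \hat w_m(\omega)\, \hat p_m(\omega)^{\times 2}$ satisfies the hypothesis of Theorem \ref{thm:compconv-maintext}. Since $p$ is assumed 2-identifiable, Theorem \ref{thm:compconv-maintext} yields a sequence of permutations $\pi_n^{(\omega)}$ of $[M]$ such that $\hat w_{\pi_n^{(\omega)}(m)}(\omega) \to w_m^*$ and $\bigl\|\hat p_{\pi_n^{(\omega)}(m)}(\omega) - p_m^*\bigr\|_1 \to 0$ for every $m$. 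Because $\Omega_0$ has probability one, this gives the claimed almost-sure convergence up to a permutation.

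The main thing to be careful about is that the permutation produced by Theorem \ref{thm:compconv-maintext} is sequence-dependent and therefore random in our setting; this is why I would formulate the argument on the full-measure event $\Omega_0$ and invoke Theorem \ref{thm:compconv-maintext} for each fixed $\omega$, rather than trying to extract a single deterministic permutation. Beyond that bookkeeping, the proof is essentially a concatenation of the two earlier results and requires no new estimates.
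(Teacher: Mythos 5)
Your proposal is correct and matches the paper's argument: the corollary is obtained exactly by combining Theorem \ref{l3} (almost-sure $L^1$ consistency of $q_{\hat w,\hat\a}$) with Theorem \ref{thm:compconv-maintext} applied pathwise on the probability-one event, after noting that $q_{\hat w,\hat\a}$ has the required factored mixture form with genuine density components. Your extra care about the sample-path-dependent permutation is a sensible bit of bookkeeping that the paper leaves implicit.
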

The significance of the result is that joint irreducibility is both a flexible nonparametric assumption, while ensuring identifiability in the case $N=2$ for which a practical implementation of $\hat q$ is possible. We include an analogous result for \emph{all identifiable NoMMs} in Appendix \ref{a:c1}.

\section{Optimization}
In this section we suggest an approach for solving \eqref{eq:generic_problem}. We first consider the problem as presented up to this point, which we call the \emph{full problem}. We then consider an approach for speeding up optimization by heuristically choosing a coreset as the kernel centers, which we call the \emph{coreset approach}. 
In what follows, we assume that $\widetilde k_\sigma(z_r,z_u) \coloneqq \int k_\sigma(x,z_r) k_\sigma(x,z_u) dx$ has a closed-form expression or can otherwise be computed efficiently. This assumption is satisfied by many common kernels such as the Gaussian, Cauchy, and Laplacian kernels.

{\bf Form of the Optimization Problem. }
The optimization problem \eqref{eq:generic_problem} can be written
\begin{align}
    \min_{w \in \Delta^M, \ \a \in \Delta_M^R} \ \sum_{k=1}^M\sum_{\ell=1}^M w_k w_\ell \Bigg( \a_k' G \a_\ell \Bigg)^2 - 2 \sum_{m=1}^M w_m \left(\a_m'C\a_m\right), \label{optprob}
\end{align}
where the matrices $G,C\in \R^{R\times R}$ will be defined shortly. Details are given in Appendix \ref{a:opt}. In particular, both the full problem and the coreset approach can be written in the form of \eqref{optprob}, differing only in the definitions of $R$ and $G,C$. We therefore propose to use the same optimization approach for both problems. For the full problem, $R=2n$ and the matrices $G$ and $C$ have the form
\begin{equation*}
    G_{a,b} = \widetilde k_\sigma(X_{\lfloor\frac{a}{2}\rfloor, a\text{ mod } 2}, X_{\lfloor\frac{b}{2}\rfloor, b\text{ mod } 2}) \quad
    C_{a,b} = \hat h(\lfloor\frac{a}{2}\rfloor, a\text{ mod } 2, \lfloor\frac{b}{2}\rfloor, b\text{ mod } 2).
\end{equation*}
Though the problem \eqref{optprob} is nonconvex, we observe that a properly initialized alternating projected stochastic gradient descent (APSGD) procedure produces good solutions in practice.

Pseudocode for the APSGD algorithm for solving \eqref{optprob} is given in Appendix \ref{a:opt}. We mention that the projections are onto the probability simplex, a decaying step size is used, and stochasticity is introduced via the matrix $C^{(t)}$, which is a mini-batch version of $C$ defined by $C^{(t)}_{a,b} = \frac{1}{|\Omega^{(t)} \backslash \{a,b\}|}\sum_{i \in |\Omega^{(t)} \backslash \{a,b\}|} k_\sigma(X_{i,1},X_{\lfloor\frac{a}{2}\rfloor, a\text{ mod } 2})k_\sigma(X_{i,2},X_{\lfloor\frac{b}{2}\rfloor, b\text{ mod } 2})$,
where $\Omega^{(t)}$ is the index set corresponding to the $t^{th}$ mini-batch.

{\bf Coreset Approach.} 
KDEs traditionally center kernels at the location of each observation, i.e., $k_\sigma(\cdot, x_{i,i'})$, where $x_{i,i'}$ is the kernel center. Rather than constraining the wKDE to have kernels centered at the observations, we can formulate the optimization problem with $R$ kernel centers $z_r \in \R^d$ for some suitably chosen $z_r$, which we take to be our coreset. Further details are given in Appendix \ref{a:opt}. We note the per-batch computational complexity for our APSGD algorithm is dominated by the gradient calculations and calculating $C^{(t)}$. If we assume $R>M$, the total complexity is $\mathcal{O}(n_e n (M+d)R^2)$ where $n_e$ is the number of training epochs. Thus, choosing $R \ll 2n$ offers a substantial speed-up.

{\bf Initialization.} \label{sec:init}
We adopt a spectral initialization scheme. We focus on the full problem for concision, but the coreset approach is similar; further details for both are provided in Appendix \ref{a:opt}. By Lemmas 5.1 and 8.2 of Vandermeulen and Scott \cite{vandermeulen_operator_2019}, one can view the standard KDE on the full sample as a symmetric linear operator $T:L^2(\R^d) \to L^2(\R^d)$. We use the eigenvectors of $T$, which are wKDEs on $\R^d$, to form a low-rank approximation of the standard KDE initialize our algorithm. This initialization is a low-rank approximation of the standard KDE.

\section{Experiments}
In this section we compare our coreset approach against several competing methods on a number of real and highly overlapping synthetic datasets. Datasets are described in Table \ref{tab: datasets}. We call the proposed method Nonparametric Density estimation of Identifiable mixture models from Grouped Observations (NDIGO). All code and synthetic datasets are publicly available.\footnote{Available Online: \url{https://github.com/aritchie9590/NDIGO}} The MAGIC gamma ray detection dataset \cite{bock2004methods} is publicly available via the UCI machine learning repository. The Russian-troll-tweets Twitter dataset is publicly available through FiveThirtyEight.\footnote{Available Online: \url{https://github.com/fivethirtyeight/russian-troll-tweets}} For NDIGO and MVLVM, we used a Gaussian kernel in all experiments and Scott's rule \cite{scott2015multivariate} was used for bandwidth selection. For synthetic experiments, $R$ was selected to yield the initialization with the lowest empirical TISE. $R$ was chosen from $\{10,20,30,40,50\}$ for both moons datasets, and from $\{60,70,80,90,100\}$ for the Olympic rings and half-disks datasets. We used $R=200$ for the MAGIC and Twitter datasets.

Several of the methods we compare against do not produce density estimates, so we evaluate the clustering induced by each method. For constrained clustering methods, we compare against constrained spectral clustering (CSC) \cite{wang2014constrained}, and constrained GMM (CGMM) \cite{basu2008constrained}. We also compare against the NoMM methods NPMIX of Aragam et al.  \cite{aragam2018identifiability} and MVLVM of Song et al. \cite{song2014nonparametric}. MVLVM is our most similar competitor as it considers groups of size three. Each constrained clustering algorithm was given access to all pair information. MVLVM was supplied triplets from the training data. NPMIX does not utilize the pair information in any way. Following the literature, we report the clustering results for the training sample. Out-of-sample results are provided in Appendix \ref{a:exp}, but we mention NPMIX is the best performer. Parameters for CSC and NPMIX were optimized w.r.t. a separately generated holdout dataset. Average results over ten runs on the synthetic datasets are shown in Figure \ref{fig:clustering_assignments}. NDIGO outperforms all methods considered.
\begin{figure}[htb]
      \centering
      \includegraphics[width=\textwidth]{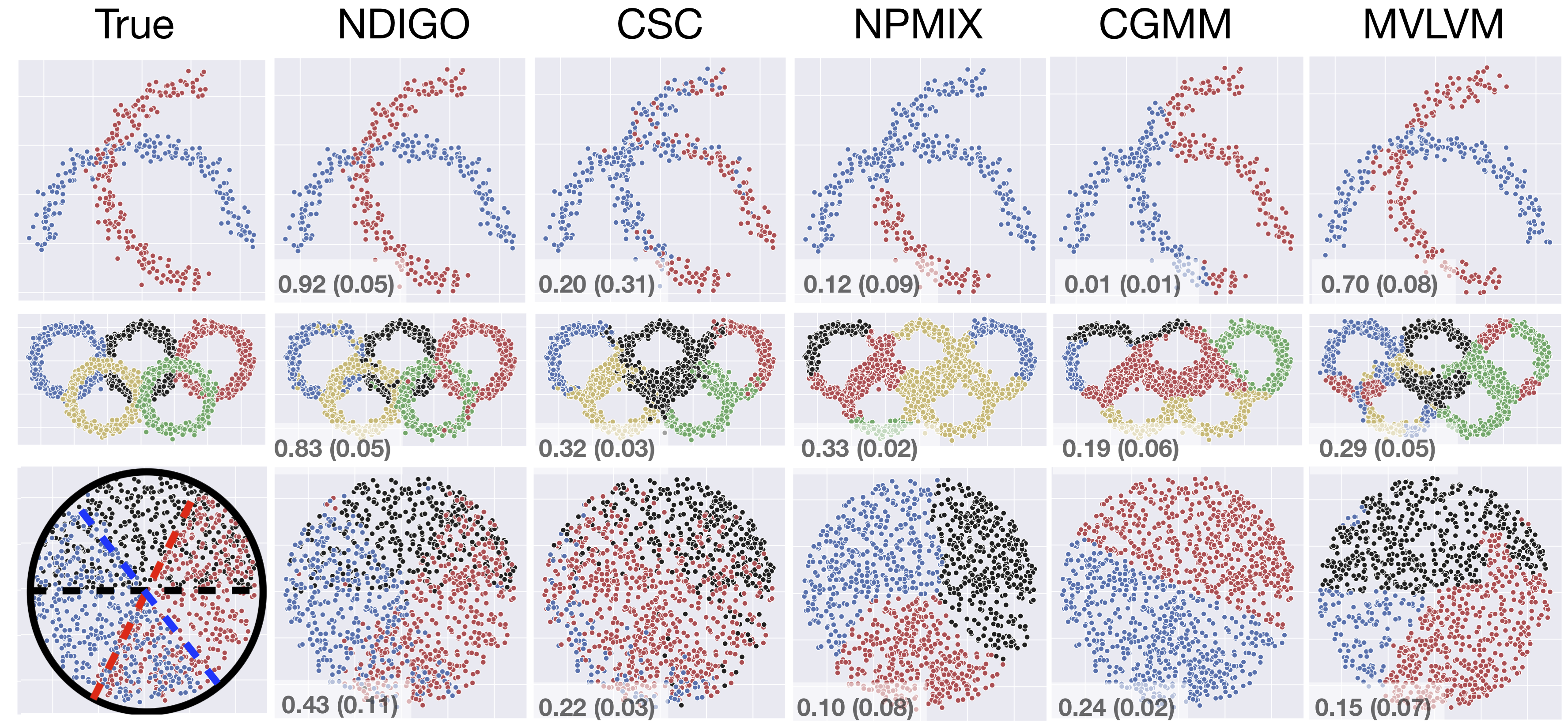}
      \caption{Example cluster assignments of four synthetic datasets by each method. Mean ARI (standard deviation) over $10$ runs is shown at the bottom left of each clustering (larger is better).The datasets are overlapping moons (top), Olympic rings (middle), and half-disks (bottom). Half-disks has been annotated to show the true components.}
    \label{fig:clustering_assignments}
\end{figure}
The synthetic datasets were constructed to have clusters that are non-ellipsoidal in shape with substantial overlap between clusters. The clusterings induced by each method are shown in Figure \ref{fig:clustering_assignments}. Performance is measured in terms of the adjusted Rand index (ARI) \cite{hubert1985comparing}. We observe that NDIGO gives superior performance across all experiments, especially when clusters have substantial overlap. Density estimates produced by our method for synthetic datasets are shown in Figure \ref{fig:kde}.

\begin{figure}[htb]
\centering
\begin{minipage}{0.62\textwidth}
    \includegraphics[width=\textwidth]{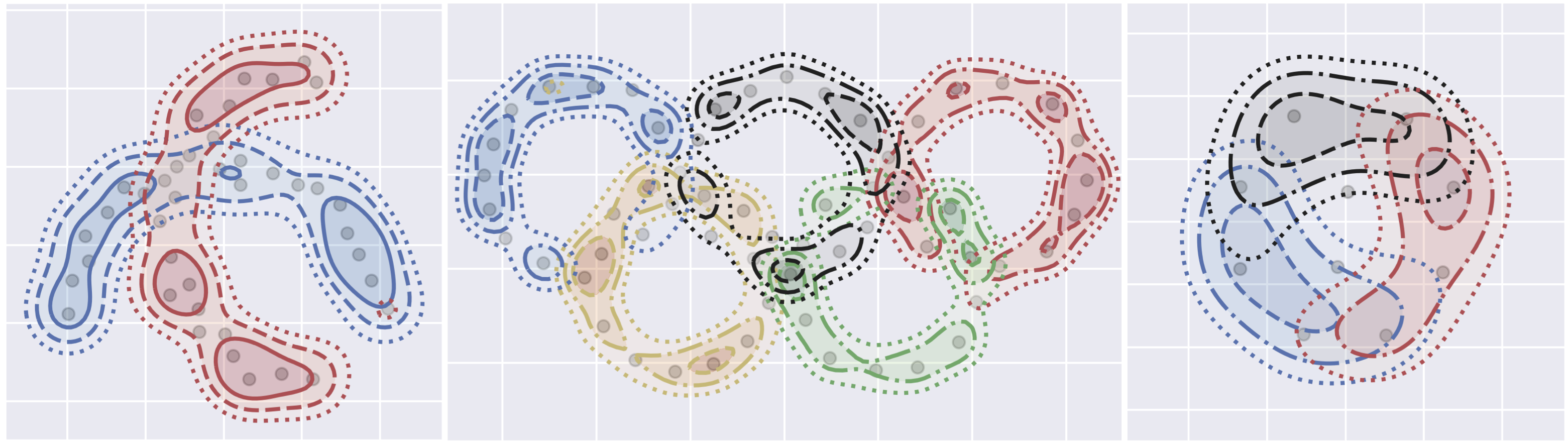}
    \caption{Component density estimate contours produced NDIGO. From left to right: overlapping moons, Olympic rings, half-disks}
    \label{fig:kde}
\end{minipage}
\begin{minipage}{0.37\textwidth}
\centering
\captionsetup{type=table} %% tell latex to change 
\hspace{2pt}
    \caption{Description of datasets. $^*$Quantities after preprocessing.}\label{tab: datasets}
  \begin{tabular}{llll}
    \toprule
    Dataset (2n) & $M$/$d$ \\
    \midrule
    Ovlp. Moons $(400)$ & $2$/$2$     \\
    Olympic Rings $(2000)$ & $5$/$2$  \\
    Half-disks $(1200)$ & $3$/$2$ \\
    MAGIC $(19,020)$ & $2$/$10$  \\
    Twitter $(3,382,162^*)$ & -/$10^*$ \\
    \bottomrule
  \end{tabular}
\end{minipage}
\end{figure}

Results on the MAGIC dataset are shown in Figure \ref{fig:roc}. The task is to detect gamma radiation events among background radiation. When detecting rare events, the proper performance indicator is given by the receiver operating characteristic (ROC) curve, which plots the true positive rate vs. the false positive rate, parameterized by the threshold of a likelihood ratio test (LRT). Each method was trained using $80\%$ of the available data, and the ROC curve was generated from the remaining $20\%$. CSC was excluded from this test because it does not produce a density estimate, so a LRT cannot be applied. As an upper bound on possible unsupervised performance, we trained KDEs on each class and plugged the resulting density estimates into an LRT. Previous studies concluded this method, which we call KDE-plugin, is the best approach \cite{bock2004methods}. We find NDIGO and CGMM perform very similarly in this experiment, outperforming other methods and approaching KDE-plugin.
\begin{figure}[htb]
\centering
\begin{minipage}{0.33\textwidth}
        \centering
    \includegraphics[width=\textwidth]{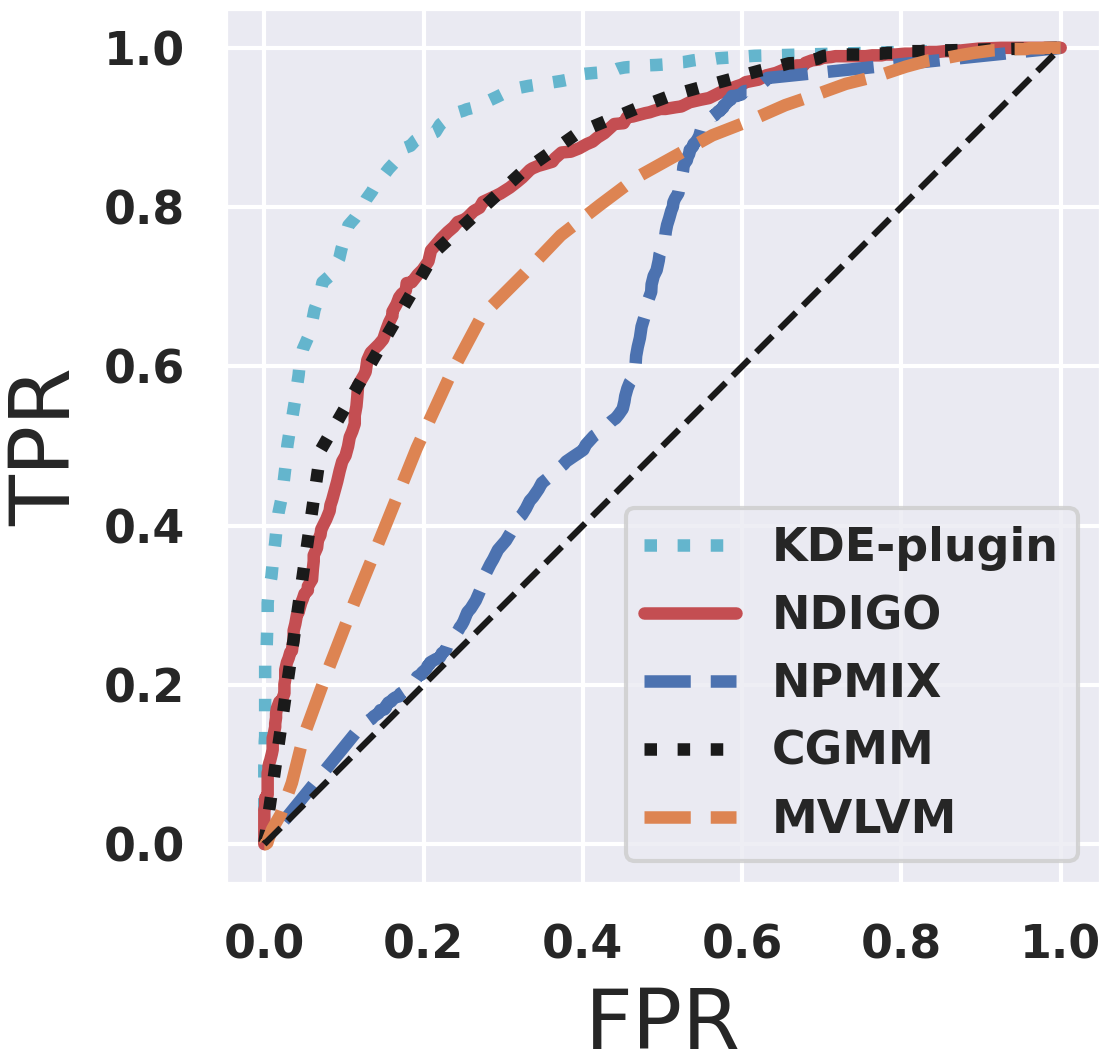}
    \caption{Receiver operating characteristic for MAGIC gamma ray detection dataset.}
    \label{fig:roc}
\end{minipage}
\hspace{0.03\linewidth}
\begin{minipage}{0.6\textwidth}
\centering
\captionsetup{type=table} %% tell latex to change 
  \caption{Russian-troll talking points learned from Twitter dataset. $^*$ Censored (racial epithet)}\label{tab:russian-troll}
  \centering
  \begin{tabular}{ll}
    \toprule
    Topic     & Selected Top 10 Words \\
    \midrule
    1 &  dead, man, kill, missing, families, young\\
    2 & make, good, better, enough, yet, even, get     \\
    3 & politics, inside, news, local, police, new, state   \\
    4 & trial, a$^*$, gentrified, wk, deport, b$^*$\\
    5 & businesses, competitive, strength, people, white \\
    \bottomrule
  \end{tabular}
  \\
  \vspace{5pt}
    \caption{Mean and standard deviation of topic coherence on Twitter dataset over five experiments.}\label{tab:coherence}
  \centering
  \begin{tabular}{llll}
    \toprule
    NDIGO & LF-DMM & GPU-DMM \\
    \midrule
    $0.456 \pm 0.069$ & $0.493 \pm 0.018$ & $0.435 \pm 0.009$ \\
    \bottomrule
  \end{tabular}
\end{minipage}
\end{figure}

We applied NDIGO to topic modeling on the Twitter dataset. Results are shown in Tables \ref{tab:russian-troll} and \ref{tab:coherence}. Details of data preprocessing are deferred to Appendix \ref{a:exp}. After preprocessing, the dataset consisted of $1,691,081$ pairs of $10$-dimensional embedded words where each element in a pair comes from the same tweet. Algorithms for competing methods, as described by their respective authors, could not scale to this experiment. Therefore, we compare to recent methods designed for continuous topic modeling of short texts: LF-DMM \cite{nguyen2015improving}, and GPU-DMM\cite{li2016topic} as implemented by Qiang et al. \cite{qiang2018STTP}.\footnote{available online: \url{https://github.com/qiang2100/STTM}} A selection of the top $10$ words of topics uncovered by NDIGO is given in Table \ref{tab:russian-troll}. We find that the discovered topics correspond well to other analyses of the dataset \cite{linvill2020troll}. Using topic coherence (pointwise mutual information) as an evaluation metric \cite{newman2010automatic}, we observe that NDIGO is competitive with the competing methods.

\section{Conclusion}
In this work we introduced a novel variant of the kernel density estimator that yields consistent estimates of any identifiable nonparametric mixture model from grouped observations. We established an oracle inequality for weighted kernel density estimators, and a general consistency result for estimators of the form $q_{w,\a}$. Namely, consistent estimation of $q$ implies consistent estimates of the underlying components when the NoMM is identifiable. In the case of $N=2$, we offer an efficient algorithm and demonstrate its effectiveness on several datasets where traditional approaches fail. Additionally, we show our approach has practical applications in topic modeling with very small documents and nuclear source detection.

% \section*{Broader Impact}
% Our work could be applied to many problems for which labeling data is prohibitive, but groups of similar data points can be collected easily. We have explored two such applications: nuclear source detection and topic modeling. These applications, as we have interpreted them, have potential for positive societal impact. Since our work is mostly theoretical and centered around a relatively unexplored sampling scheme, there are likely applications we have not anticipated. We note that topic modeling, in general, has potential surveillance applications, but this is not unique to the proposed method.

\bibliographystyle{IEEEtran}
% argument is your BibTeX string definitions and bibliography database(s)
\bibliography{IEEEabrv,NPMM_arXiv.bib}

\begin{appendices}

\section{Additional Experimental Details} \label{a:exp}
In this section we provide details of the Twitter experiment and out-of-sample results for experiments on synthetic datasets.

\subsection{Out-of-sample Results}
Here we provide out-of-sample results for the synthetic experiments shown in the main paper. These are shown in Table \ref{tab:out-of-sample}. We make the realistic assumption that pair information is not available for out-of-sample data. We generate a test dataset $20\%$ the size of the training set according to the distribution of the training data.

\begin{table}[htb!]
    \centering
  \caption{Out-of-sample ARI and standard deviation over $10$ runs on synthetic datasets.}\label{tab:out-of-sample}
  \resizebox{\columnwidth}{!}{
  \begin{tabular}{llllll}
    \toprule
    Dataset & NDIGO & CSC & NPMIX & CGMM & MVLVM \\
    \midrule
    Overlapping moons & $0.705 \pm 0.091$ & $0.405 \pm 0.283$ & $0.131 \pm 0.075$ & $0.002 \pm 0.011$ & $0.593 \pm 0.229$\\
    Olympic Rings & $0.607 \pm 0.058$ & $0.387 \pm 0.033$ & $0.367 \pm 0.063$ & $0.127 \pm 0.012$ & $0.290 \pm 0.053$\\
    Half-disks & $0.221 \pm 0.036$ & $0.215 \pm 0.038$ & $0.102 \pm 0.095$ & $0.185 \pm 0.076$ & $0.127 \pm 0.062$ \\
    \bottomrule
  \end{tabular}
  }
\end{table}

\subsection{Preprocessing of Twitter Dataset}
Twitter dataset is publicly available through FiveThirtyEight.\footnote{ \url{https://github.com/fivethirtyeight/russian-troll-tweets}} The data consist of tweets, from a variety Russian-troll twitter accounts, tweeted between $2015$ and $2018$. We considered all tweets from 2016, a total of $878,878$. We pre-processed the tweets by removing stop words, punctuation, and hyperlinks, followed by a lemmatization step and the removal of any words that were not contained in the vocabulary of the six billion token GloVe word vectors \cite{pennington2014glove}. For completeness, we mention that lemmatization is a common pre-processing step in natural language processing that removes inflectional differences from words by mapping each inflection to a common base form called the lemma. For example, lemmatization will map each of the words dog, dogs, dog's, dogs', and doggy to the word dog. For each tweet, we paired the constituent words uniformly at random without replacement, resulting in $1,691,081$ pairs of words where the words of a given pair come from the same tweet. No other information was retained. We emphasize that a given word from a given tweet will \emph{not} be assigned to more than one pair. However, if a given word appears in multiple tweets (which may not all be about the same topic), it will show up in multiple pairs.

For the embedding step we performed PCA on the pre-trained GloVe $50$-dimensional embeddings to obtain $10$-dimensional vectors which were used to encode the paired words. The kernel centers were obtained by running mini-batch $k$-means with $R=200$ on a uniform random sample of $10,000$ of the $1,691,081$ word pairs, from which the matrix $G$ was also calculated. We then trained on the full $1,691,081$ word pairs, which are utilized in mini-batches through the matrix $C^{(t)}$. 

Algorithms for competing methods, as described by their respective authors, could not scale to this experiment. Therefore, we compare to recent methods designed for continuous topic modeling of short texts: LF-DMM \cite{nguyen2015improving}, and GPU-DMM\cite{li2016topic} as implemented by Qiang et al. \cite{qiang2018STTP}.\footnote{\url{https://github.com/qiang2100/STTM}} LF-DMM and GPU-DMM were trained on the same preprocessed data as NDIGO, where each of the $1,691,081$ word pairs is considered a unique document. Each of these methods were run with default hyperparameters, as described in the documentation for GPU-DMM and LF-DMM.$^2$ After training, UCI topic coherence \cite{newman2010automatic} (which measures pointwise mutual information) was used to evaluate performance. UCI topic coherence uses a reference dataset to estimate word co-occurrence probabilities, which is more robust in the short text setting as very common words in a given topic may never be observed to co-occur. A recent Wikipedia article dump was used for the reference dataset, and is provided with our code.

\section{Optimization Details} \label{a:opt}
In this section we provide details of our algorithm, including the form of the objective function and initialization, for both the full problem and the coreset approach.

Recall the expression for the ETISE
\begin{align*}
    \hat J(w,\a) &\triangleq \int q_{w,\a}^2(x,x') dx dx' 
    -2\sum_{m=1}^M \sum_{r=1}^n \sum_{r'=1}^2 \sum_{s=1}^n \sum_{s'=1}^2 w_m  \a_{mrr'} \a_{mss'} \hat h(r,r',s,s'), \label{ETISE}
\end{align*}
where
\begin{align*}
    \hat h(r,r',s,s') &\triangleq \begin{cases}
    \hat h_{\text{LTO}}(r,r',s,s'), & r \neq s \\
    \hat h_{\text{LOO}}(r,r',s'), & r=s
    \end{cases} \\
    \hat h_{\text{LOO}}(r,r',r'') &\triangleq \frac{1}{n-1} \sum_{i \in [n] \backslash \{r\}}k_\sigma(x_{i,1},x_{r,r'})k_\sigma(x_{i,2},x_{r,r''})\\
    \hat h_{\text{LTO}}(r,r',s,s') &\triangleq \frac{1}{n-2} \sum_{i \in [n] \backslash \{r,s\}}k_\sigma(x_{i,1},x_{r,r'})k_\sigma(x_{i,2},x_{s,s'}).
\end{align*}
For ease of computation, we will rewrite $\hat J(w,\a)$ in terms of matrix operations. In what follows, we assume that $\widetilde k_\sigma(z_r,z_u) \coloneqq \int k_\sigma(x,z_r) k_\sigma(x,z_u) dx$ has a closed-form expression or can otherwise be computed efficiently. Some examples \cite{kim2012robust} are give in Table \ref{tab:kernel}.

\begin{table}[htb!]\caption{Some popular kernel functions and their associated $\widetilde k$. Here $\left \lVert \cdot\right \rVert_2$ is the Euclidean norm.} \label{tab:kernel}
\begin{center}
\resizebox{\columnwidth}{!}{
\begin{tabular}{ |c|c|c| } 
 \hline
 Kernel & $k_\sigma(x, x')$ & $\widetilde k_\sigma(x,x')$ \\
 \hline
 \hline
 Gaussian & $\left(\frac{1}{\sqrt{2\pi}\sigma}\right)^d \exp \left( -\frac{\left \lVert x-x'\right \rVert_2^2}{2\sigma^2} \right)$ & $k_{\sqrt{2}\sigma}(x,x')$ \\ 
 Cauchy & $\left(\frac{1}{\sqrt{\pi}\sigma}\right)^d \left( \frac{\Gamma((1+d)/2)}{\Gamma(1/2)} \right) \left(\frac{\sigma^2 + \left \lVert x-x'\right \rVert_2^2}{\sigma^2} \right)^{-\frac{1+d}{2}}$ & $k_{2\sigma}(x,x')$ \\ 
 
 Laplacian & $\frac{c_d}{\sigma^d}\exp \left( -\frac{\left \lVert x-x'\right \rVert_1}{\sigma} \right)$ & $\frac{1}{(4\sigma)^d}\prod_{l=1}^d\left(\frac{\sigma + |x_l-x_l'|}{\sigma}\right)\exp \left( -\frac{\left \lVert x-x'\right \rVert_1}{\sigma} \right)$ \\
 \hline
\end{tabular}
}
\end{center}
\end{table}

\subsection{Full Optimization Problem}
We begin by examining the first term of $\hat J(w,\a)$ 
\begin{align*}
    &\int q_{w,\a}(x,x')^2 dx dx' = \int \Bigg(\sum_m w_m \sum_r \sum_{r'} \a_{m,r,r'} k_\sigma(x,x_{r,r'}) \sum_s \sum_{s'} \a_{m,s,s'} k_\sigma(x,x_{s,s'}) \Bigg) \\ 
    & \qquad \qquad \qquad \qquad \times \Bigg(\sum_j w_j \sum_u \sum_{u'} \a_{j, u,u'} k_\sigma(x,x_{u,u'}) \sum_v \sum_{v'} \a_{j, v,v'} k_\sigma(x,x_{v,v'}) \Bigg) dxdx' \\
    &= \sum_{m,j} w_m w_j \sum_{r,r',u,u'} \sum_{s,s',v,v'} \a_{m,r,r'} \a_{m,s,s'} \a_{j, u,u'} \a_{j, v,v'} \\
    &\qquad \qquad \qquad \qquad \times \int k_\sigma(x,x_{r,r'}) k_\sigma(x,x_{u,u'}) dx \int k_\sigma(x',x_{s,s'}) k_\sigma(x',x_{v,v'}) dx' \\
    &= \sum_{m,j} w_m w_j \sum_{r,r',u,u'} \a_{m,r,r'} \a_{j, u,u'} \widetilde k_\sigma(x_{r,r'}, x_{u,u'}) \sum_{s,s',v,v'} \a_{m,s,s'} \a_{j, v,v'} \widetilde k_\sigma(x_{s,s'}, x_{v,v'}) \\
    &= \sum_{m,j} w_m w_j \Bigg( \a_m' G a_j \Bigg)^2,
\end{align*}
where $\times$ in the first line is scalar multiplication and $G$ is the kernel matrix of the data and is given by

\begin{equation*}
     G  = \left[ \begin{array}{cccccc}
            \widetilde k_\sigma(x_{1,1},x_{1,1}) &  \widetilde k_\sigma(x_{1,1},x_{1,2}) & \cdots & \cdots &  \widetilde k_\sigma(x_{1,1},x_{n,1}) & \widetilde k_\sigma(x_{1,1},x_{n,2}) \\
          \widetilde k_\sigma(x_{1,2},x_{1,1}) &  \widetilde k_\sigma(x_{1,2},x_{1,2}) & \cdots & \cdots &  \widetilde k_\sigma(x_{1,2},x_{n,1}) & \widetilde k_\sigma(x_{1,2},x_{n,2}) \\
          \vdots & \vdots & \ddots &  \ddots& \vdots & \vdots\\
          \vdots & \vdots & \ddots &  \ddots& \vdots & \vdots\\
          \widetilde k_\sigma(x_{n,1},x_{1,1}) &  \widetilde k_\sigma(x_{n,1},x_{1,2}) & \cdots & \cdots &  \widetilde k_\sigma(x_{n,1},x_{n,1}) & \widetilde k_\sigma(x_{n,1},x_{n,2}) \\
          \widetilde k_\sigma(x_{n,2},x_{1,1}) &  \widetilde k_\sigma(x_{n,2},x_{1,2}) & \cdots & \cdots &  \widetilde k_\sigma(x_{n,2},x_{n,1}) & \widetilde k_\sigma(x_{n,2},x_{n,2}) \\
     \end{array}\right].
\end{equation*}

Examining the second term of the ETISE yields
\begin{align*}
    &\mathbb{E}_q\left[ q_{w,\a} \right] \approx \sum_{m=1}^M \sum_{r=1}^n \sum_{r'=1}^2 \sum_{s=1}^n \sum_{s'=1}^2 w_m  \a_{m,r,r'} \a_{m,s,s'} \hat h(r,r',s,s') \\
    &= \begin{cases} \frac{1}{n-2}  \sum_{m}\sum_{r,r'}\sum_{s,s'} w_m  \a_{m,r,r'} \a_{m,s,s'} \sum_{i \in [n] \backslash \{r,s\}}k_\sigma(x_{i,1},x_{r,r'})k_\sigma(x_{i,2},x_{s,s'}), & r\neq s\\
    \frac{1}{n-1}  \sum_{m}\sum_{r,r'}\sum_{s,s'}w_m  \a_{m,r,r'} \a_{m,s,s'} \sum_{i \in [n] \backslash \{r\}}k_\sigma(x_{i,1},x_{r,r'})k_\sigma(x_{i,2},x_{r,s'}), & r=s
    \end{cases} \\
    &= \sum_{m=1}^M w_m \left(\a_m'C\a_m\right),
\end{align*}
where $C$ is given by

\begin{equation*}
    C =  \left[ \begin{array}{cccccc}
            \hat h(1,1,1,1) &  \hat h(1,1,1,2) & \cdots & \cdots &  \hat h(1,1,n,1) & \hat h(1,1,n,2) \\
          \hat h(1,2,1,1) &  \hat h(1,2,1,2) & \cdots & \cdots &  \hat h(1,2,n,1) & \hat h(1,2,n,2) \\
          \vdots & \vdots & \ddots &  \ddots& \vdots & \vdots\\
          \vdots & \vdots & \ddots &  \ddots& \vdots & \vdots\\
           \hat h(n,1,1,1) &  \hat h(n,1,1,2) & \cdots & \cdots &  \hat h(n,1,n,1) & \hat h(n,1,n,2) \\
          \hat h(n,2,1,1) &  \hat h(n,2,1,2) & \cdots & \cdots &  \hat h(n,2,n,1) & \hat h(n,2,n,2) \\
     \end{array}\right].
\end{equation*}
The diagonal blocks of size two of the matrix $C$ use the leave one out estimator, while the other entries use the leave two out estimator.

\subsection{Coreset Approach}
In the full problem the matrices $G,C \in \R^{2n\times 2n}$ grow linearly with the data. This can make the proposed optimization problem \eqref{eq:generic_problem} costly to solve, as the complexity of gradient calculations are quadratic in the dimensions of $G,C$. Additionally, the complexity of evaluating out-of-sample data is quadratic in $n$ for general KDEs. The motivation of the coreset approach is to reduce this complexity.

KDEs traditionally center kernels at the location of each observation, i.e., $k_\sigma(\cdot, x_{i,i'})$, where we call $x_{i,i'}$ the kernel center. Rather than constraining the wKDE to have kernels centered at the observations, we can formulate the optimization problem with $R$ kernel centers $z_r \in \R^d$ for some suitably chosen $z_r$. Additionally, choosing $R\ll n$ will substantially reduce the complexity of gradient calculations and out-of-sample evaluation. The collection of kernel centers $z_r$ will be our coreset. We don't provide guarantees for the optimality of any particular coreset. The coreset could potentially be chosen as the cluster centers output by some clustering algorithm, some suitable subset of the data, or perhaps via some more principled scheme. In all of our experiments, we chose the coreset to be cluster centers output by mini-batch $k$-means, where the number of clusters was chosen to be $R > M$. 

For the coreset approach, the ETISE has the same form but the matrices $G$ and $C$ have the form
\begin{align*}
     G  = &\left[ \begin{array}{ccccccccc}
          \widetilde k_\sigma(z_{1},z_{1}) &  \cdots & \cdots &  \widetilde k_\sigma(z_{1},z_{R}) \\ 
          \vdots & \ddots &  \ddots& \vdots \\
          \vdots & \ddots & \ddots & \vdots \\
          \widetilde k_\sigma(z_{R},z_{1}) & \cdots & \cdots & \widetilde  k_\sigma(z_{R},z_{R}) 
     \end{array}\right], \\
    C = \frac{1}{n}\sum_{i=1}^nC_i  = \frac{1}{n}\sum_{i=1}^n &\left[ \begin{array}{ccccccccc}
          k_\sigma(x_{i,1},z_{1})k_\sigma(x_{i,2},z_{1}) & \cdots & \cdots &  k_\sigma(x_{i,1},z_{1})k_\sigma(x_{i,2},z_{R}) \\
          \vdots & \ddots & \ddots & \vdots \\
          \vdots & \ddots & \ddots & \vdots \\
          k_\sigma(x_{i,1},z_{R})k_\sigma(x_{i,2},z_{1}) & \cdots & \cdots &  k_\sigma(x_{i,1},z_{R})k_\sigma(x_{i,2},z_{R})
     \end{array}\right]
\end{align*}
This is derived in the same way as the full problem, replacing the kernel centers $x_{r,r'},x_{s,s'}$ with the coreset $z_r$, and using the "leave-none-out" estimator in place of the LOO/LTO estimator $\hat h$.

\subsection{Algorithm}
Though the problem \eqref{optprob} is nonconvex, we observe that a properly initialized alternating projected stochastic gradient descent (APSGD) procedure produces good solutions in practice. Pseudocode for the APSGD algorithm for solving \eqref{optprob} is given in Algorithm \ref{APSGD}. We mention that the projections $\Pi_\Delta$ are onto the probability simplex, a decaying step size $\eta^{(t)}$ is used, and stochasticity is introduced via the matrix $C^{(t)}$, which is a mini-batch version of $C$ defined by $C^{(t)}_{a,b} = \frac{1}{|\Omega^{(t)} \backslash \{a,b\}|}\sum_{i \in |\Omega^{(t)} \backslash \{a,b\}|} k_\sigma(X_{i,1},X_{\lfloor\frac{a}{2}\rfloor, a\text{ mod } 2})k_\sigma(X_{i,2},X_{\lfloor\frac{b}{2}\rfloor, b\text{ mod } 2})$,
where $\Omega^{(t)}$ is the index set corresponding to the $t^{th}$ mini-batch.

\begin{algorithm}[htb!]
\caption{Alternating Projected SGD} \label{APSGD}
\begin{algorithmic}[1]
\State \textbf{init:} $\a^{(0)}, w^{(0)}, \eta^{(0)}$
 \Procedure{APSGD}{$\a^{(0)}, w^{(0)}, \eta^{(0)}$} 
    \State{Form $G$}
    \For{$t=1,2,\dots$}
    \State{Take a minibatch of paired observations indexed by $\Omega^{(t)}$}
    \State{Form $C^{(t)}$ from the minibatch according to the definition of $C$}
    \State{$w^{(t)} = \Pi_\Delta(w^{(t-1)} - \eta^{(t)}\nabla_w \hat J(w^{(t-1)},\a^{(t-1)}))$} 
    \For{$j=1,\dots,M$}
    \State{$\a_j^{(t)} = \Pi_\Delta(\a_j^{(t-1)} - \eta^{(t)}\nabla_{\a_j}\hat J(w^{(t)}, \a_j^{(t-1)}))$}
    \EndFor{}
    \EndFor{}
 \EndProcedure
 \end{algorithmic}
\end{algorithm}

\subsection{Spectral Initialization}
We adopt a spectral initialization scheme. First, the initialization is presented for the full problem, then we adapt it to the coreset approach. The idea here is, given some estimator of $q$, lets say $\tilde{q}$, to find a low rank approximation of $\tilde{q}$
$$
    \tilde{q}(x,y) \approx \sum_{i=1}^M \lambda_i \psi_i(x)\psi_i(y)
$$
and then to use $\lambda_i$ and $\psi_i$ as starting points for our mixture weights and components. We do this by using the full grouped sample data as an estimate of $q$ which we transform into a linear operator and decompose using a functional eigenvector decomposition.

We begin with a standard KDE applied to our full samples using a product kernel:
\begin{equation*}
    f_{\sigma}(y,y') = \frac{1}{2n}\sum_{i=1}^n  k_{\sigma}(y, x_{i,1})  k_\sigma(y', x_{i,2}) + k_{\sigma}(y, x_{i,2}) k_\sigma(y', x_{i,1}).
\end{equation*}
Note that we include centers at both $(x_{i,1},x_{i,2})$ and $(x_{i,2},x_{i,1})$ so our KDE is symmetric in $y,y'$.

By Lemmas 5.1 and 8.2 of Vandermeulen and Scott \cite{vandermeulen_operator_2019}, $f_\sigma$ can be viewed as an element of a tensor product space $L^2(\R^d) \otimes L^2(\R^d)$ as follows
$$
    f_{\sigma} = \frac{1}{2n}\sum_{i=1}^n  k_{\sigma}(\cdot, x_{i,1}) \otimes k_\sigma(\cdot, x_{i,2}) + k_{\sigma}(\cdot, x_{i,2})\otimes k_\sigma(\cdot, x_{i,1}).
$$
By the Lemmas referenced above, there is a unitary transformation on the KDE $f_\sigma$ such that it can be viewed as a linear operator $T:L^2(\R^d) \to L^2(\R^d)$ given by 
\begin{equation*}
    T(g) \coloneqq \sum_{i=1}^n k_\sigma(\cdot,x_{i,1})\langle k_\sigma(\cdot,x_{i,2}), g(\cdot) \rangle_{L^2}  + k_\sigma(\cdot,x_{i,2})\langle k_\sigma(\cdot,x_{i,1}),  g(\cdot) \rangle_{L^2}, \quad \forall g \in L^2(\R^d)
\end{equation*}
which is symmetric since it includes $k_\sigma(\cdot,x_{i,1})\langle k_\sigma(\cdot,x_{i,2}), g(\cdot) \rangle_{L^2}$ and $k_\sigma(\cdot,x_{i,2})\langle k_\sigma(\cdot,x_{i,1}),  g(\cdot) \rangle_{L^2}$ terms. We have removed the $1/(2n)$ coefficient since it will not affect the spectral decomposition.
For any $g \in L^2$ the quantity $\langle k_\sigma(\cdot,x_{i,i'}), g(\cdot) \rangle_{L^2}$ will be a finite scalar, so $T(g)$ will be a linear combination of the $k_\sigma(\cdot,x_{i,i'})$. Therefore, eigenvectors of the above linear operator will have the form $g(\cdot) = \sum_{j,j'}\beta_{j,j'}k_\sigma(\cdot,x_{j,j'})$ since $T$ applied to any vector must lie in the span of $k_\sigma\left(\cdot, x_{j,j'} \right)$. Evaluating $T$ on vectors of this form (not necessarily an eigenvector) will yield
\begin{align*}
    T(g) &\coloneqq \sum_{i=1}^n \Big\{k_\sigma(\cdot,x_{i,1})\langle k_\sigma(\cdot,x_{i,2}), \sum_{j,j'}\beta_{j,j'}k_\sigma(\cdot,x_{j,j'}) \rangle_{L^2} \\
    &\qquad \qquad+ k_\sigma(\cdot,x_{i,2})\langle k_\sigma(\cdot,x_{i,1}),  \sum_{j,j'}\beta_{j,j'}k_\sigma(\cdot,x_{j,j'}) \rangle_{L^2} \Big\} \\
    &= \sum_{i}\zeta_{i,1}k_\sigma (\cdot,x_{i,1}) + \zeta_{i,2}k_\sigma (\cdot,x_{i,2})
\end{align*}
where $\zeta_{i,1} = \sum_{j,j'}\beta_{j,j'} \tilde{k}_\sigma(x_{i,2},x_{j,j'})$, $\zeta_{i,2} = \sum_{j,j'}\beta_{j,j'} \tilde{k}_\sigma(x_{i,1},x_{j,j'})$ and $\tilde{k}_\sigma(y,y')$ is defined in Appendix \ref{a:opt}. 

Define the ordering of the elements of $\beta$ and $\zeta$ by 
\begin{align*}
    \beta &= [\beta_{1,1}, \beta_{1,2}, \beta_{2,1}, \beta_{2,2}, \dots,  \beta_{n,1}, \beta_{n,2}]',\\
    \zeta &= [\zeta_{1,1}, \zeta_{1,2}, \zeta_{2,1}, \zeta_{2,2}, \dots,  \zeta_{n,1}, \zeta_{n,2}]'.
\end{align*}
Then we have
\begin{equation*}
    \zeta = \bar G\beta,
\end{equation*}
where
\begin{equation*}
    \bar G = \left[ \begin{array}{cccccccc}
            \widetilde k_\sigma(x_{1,2},x_{1,1}) &  \widetilde k_\sigma(x_{1,2},x_{1,2})
            &\widetilde k_\sigma(x_{1,2},x_{2,1}) &  \cdots &  \widetilde k_\sigma(x_{1,2},x_{n,1}) & \widetilde k_\sigma(x_{1,2},x_{n,2}) \\
            \widetilde k_\sigma(x_{1,1},x_{1,1}) &  \widetilde k_\sigma(x_{1,1},x_{1,2})
            &\widetilde k_\sigma(x_{1,1},x_{2,1}) &    \cdots &  \widetilde k_\sigma(x_{1,1},x_{n,1}) & \widetilde k_\sigma(x_{1,1},x_{n,2}) \\
            \widetilde k_\sigma(x_{2,2},x_{1,1}) &  \widetilde k_\sigma(x_{2,2},x_{1,2})
            &\widetilde k_\sigma(x_{2,2},x_{2,1}) &   \cdots &  \widetilde k_\sigma(x_{2,2},x_{n,1}) & \widetilde k_\sigma(x_{2,2},x_{n,2}) \\
            \widetilde k_\sigma(x_{2,1},x_{1,1}) &  \widetilde k_\sigma(x_{2,1},x_{1,2})
            &\widetilde k_\sigma(x_{2,1},x_{2,1}) 
            & \cdots &  \widetilde k_\sigma(x_{2,1},x_{n,1}) & \widetilde k_\sigma(x_{2,1},x_{n,2}) \\
          \vdots & \vdots & \vdots &  \ddots& \vdots & \vdots\\
          \widetilde k_\sigma(x_{n,2},x_{1,1}) &  \widetilde k_\sigma(x_{n,2},x_{1,2}) & \widetilde k_\sigma(x_{n,2},x_{2,1}) & \cdots &  \widetilde k_\sigma(x_{n,2},x_{n,1}) & \widetilde k_\sigma(x_{n,2},x_{n,2}) \\
          \widetilde k_\sigma(x_{n,1},x_{1,1}) &  \widetilde k_\sigma(x_{n,1},x_{1,2}) & \widetilde k_\sigma(x_{n,1},x_{2,1}) & \cdots &  \widetilde k_\sigma(x_{n,1},x_{n,1}) & \widetilde k_\sigma(x_{n,1},x_{n,2}) \\
     \end{array}\right]
\end{equation*}
is a row permutation of $G$ defined for the full problem, obtained by exchanging rows corresponding to the first and second elements of each paired observation. The takeaway is that the coefficients of the eigenvectors of $T$ are given by the right eigenvectors of $\bar G$. In particular, we will take the right eigenvectors of $\bar G$ corresponding to the $M$ largest real eigenvalues with the intuition that they will capture the dominant modes of $T$. Note that these eigenvectors will contain real-valued entries by the spectral theorem since $T$ is symmetric. We call these eigenvectors the first, second, and so on. It should be noted that $\bar G$ is not a symmetric matrix and so eigenvectors should be found according to, for example, the power iteration or orthogonal iteration. In general the eigenvectors of $ \bar G$ will have negative entries and not sum to one, so we project these eigenvectors onto the probability simplex to obtain the non-negative weights for initialization where we take $\a_1$ to the be the projection of the first eigenvector of $\bar G$, $\a_2$ to the be the second, and so on. For the initial $w_i$, We take $w_1$ to be the first eigenvalue of $\bar G$, $w_2$ to be the second and so on, projecting the resulting $w = [w_1, w_2, \dots, w_M]'$ onto the probability simplex.

\paragraph{Coreset Approach Initialization} Initialization for the coreset approach is similar. Since we assume no relationship between $z_i$, we don't have the same notion of using paired kernel centers even though our data is still paired. However, we can still write the KDE using a product kernel over the coreset as
\begin{equation*}
    f_{\sigma}(y,y') = \frac{1}{R}\sum_{i=1}^R  k_{\sigma}(y, z_i)  k_\sigma(y', z_i).
\end{equation*}
This KDE is symmetric since the same kernel center is used in each term of the product kernel. Again appealing to Lemmas 5.1 and 8.2 of Vandermeulen and Scott \cite{vandermeulen_operator_2019}, $f_\sigma$ can be viewed as an element of a tensor product space $L^2(\R^d) \otimes L^2(\R^d)$ as follows
$$
    f_{\sigma} = \frac{1}{R}\sum_{i=1}^R  k_{\sigma}(\cdot, z_i) \otimes k_\sigma(\cdot, z_i).
$$
By the Lemmas referenced above, there is a unitary transformation on the KDE $f_\sigma$ such that it can be viewed as a linear operator $T:L^2(\R^d) \to L^2(\R^d)$ given by
\begin{equation*}
    T(g) = \sum_{i=1}^R k_\sigma(\cdot, z_i)\langle k_\sigma(\cdot,z_i), g(\cdot) \rangle_{L^2}, \quad \forall g \in L^2.
\end{equation*}
By the same argument as the full problem, the eigenvectors of the above operator have the form $g(\cdot) = \sum_{j}\beta_{j}k_\sigma(\cdot,z_j)$. Applying $T$ to a vector of this form (not necessarily an eigenvector), we have
\begin{align*}
    T(\sum_{j}\beta_{j}k_\sigma(\cdot,z_j)) &= \sum_{i=1}^R k_\sigma(\cdot, z_i) \langle k_\sigma(\cdot, z_i), \sum_{j}\beta_{j}k_\sigma(\cdot,z_j) \rangle_{L^2} \\
    &= \sum_{i=1}^R \zeta_i k_\sigma(\cdot, z_i),
\end{align*}
where $\zeta_i = \sum_j \beta_j k_\sigma(\cdot, z_i)$. In this setting we have the standard ordering, $\beta = [\beta_1, \beta_2, \dots, \beta_R]'$ and $\zeta = [\zeta_1, \zeta_2, \dots, \zeta_R]'$. In matrix form, the relationship between $\beta$ and $\zeta$ is given by
\begin{equation*}
    \zeta = G \beta,
\end{equation*}
where $G$ is as previously defined for the coreset approach. No row permutation is needed in this setting as both centers of our product kernel are the same. From this point, the initialization scheme is essentially the same as for the full problem, but using the eigenvectors and eigenvalues of $G$. One key difference is that $G$ \emph{is} a symmetric matrix, so the eigenvalues and eigenvectors of $G$ can be found using any standard solver.

\section{Proof and General Form of Theorem \ref{oe}} \label{a:t1}
Theorem \ref{oe} was presented in the main paper for groups of size $N=2$. Here, we provide the proof for groups of size two, as well as the proof for the more general case of arbitrary group size $N>2$. One tool we will use is Hoeffding's inequality for independent bounded random variables, which we state here for completeness.
\begin{theorem*}
Hoeffding's Inequality: Let $V_1, V_2, \dots, V_n$ be independent bounded random variables such that $a_i\leq V_i \leq b_i$ with probability one. If $S_n = \sum_{i=1}^n V_i$, then for all $t > 0$
\begin{equation*}
    P\left\{\Big|S_n - \mathbb{E}\{S_n\} \Big| \geq t\right\}\leq 2\exp\left\{-\frac{2t^2}{\sum_{i=1}^n (b_i-a_i)^2}\right\}.
\end{equation*}
\end{theorem*}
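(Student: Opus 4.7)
The plan is to prove Hoeffding's inequality via the standard Chernoff (exponential moment) method, reducing the tail probability for $S_n$ to a bound on the moment generating function (MGF) of each summand. First I would center the variables: set $Y_i := V_i - \mathbb{E}[V_i]$, so that $\mathbb{E}[Y_i] = 0$ and $Y_i$ lies in an interval of length $b_i - a_i$. A union bound splits $P(|S_n - \mathbb{E}[S_n]| \ge t)$ into the two one-sided tails $P(\sum_i Y_i \ge t)$ and $P(\sum_i Y_i \le -t)$, and by symmetry (replacing $V_i$ with $-V_i$) it suffices to bound the upper tail by $\exp(-2t^2 / \sum_i (b_i-a_i)^2)$ and then double.

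For the upper tail, I would apply Markov's inequality to $\exp(\lambda \sum_i Y_i)$ for $\lambda > 0$, obtaining $P(\sum_i Y_i \ge t) \le e^{-\lambda t}\, \mathbb{E}[\exp(\lambda \sum_i Y_i)]$. Independence of the $V_i$ (and hence of the $Y_i$) factorizes this as $e^{-\lambda t} \prod_i \mathbb{E}[\exp(\lambda Y_i)]$, so the problem collapses to uniformly controlling the MGF of a single mean-zero random variable supported in an interval of length $b_i - a_i$.

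The main obstacle, and the heart of the proof, is establishing Hoeffding's lemma: if $Y$ is mean-zero and takes values in $[a',b']$ almost surely, then $\mathbb{E}[\exp(\lambda Y)] \le \exp(\lambda^2 (b' - a')^2 / 8)$. I would prove this by convexity of the exponential: for every $y \in [a',b']$, $e^{\lambda y} \le \tfrac{b' - y}{b' - a'}\, e^{\lambda a'} + \tfrac{y - a'}{b' - a'}\, e^{\lambda b'}$. Taking expectations and using $\mathbb{E}[Y] = 0$ reduces the MGF bound to a deterministic function of $\lambda$ alone. Setting $u = \lambda(b' - a')$ and $p = -a'/(b' - a')$, this bound can be written as $e^{\phi(u)}$ with $\phi(u) = -pu + \log(1 - p + p e^u)$; a short computation gives $\phi(0) = \phi'(0) = 0$ and $\phi''(u) = s(1-s) \le 1/4$ (where $s = p e^u / (1 - p + p e^u) \in [0,1]$), so by Taylor's theorem $\phi(u) \le u^2/8$, yielding the desired MGF bound.

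Finally, combining Hoeffding's lemma with the Chernoff bound gives $P(\sum_i Y_i \ge t) \le \exp\!\bigl(-\lambda t + \lambda^2 \sum_i (b_i - a_i)^2 / 8\bigr)$ for every $\lambda > 0$. Minimizing the right-hand side over $\lambda$ at the optimum $\lambda^* = 4t / \sum_i (b_i - a_i)^2$ produces the one-sided bound $\exp\!\bigl(-2t^2 / \sum_i (b_i - a_i)^2\bigr)$, and symmetry followed by the union bound over the two tails introduces the factor of $2$ in the final inequality. The only nontrivial technical step is Hoeffding's lemma (specifically the uniform bound $\phi'' \le 1/4$); everything else is a mechanical Chernoff-and-optimize calculation that I expect to go through without complication.
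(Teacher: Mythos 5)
Your proof is correct: the Chernoff bound combined with Hoeffding's lemma (proved via convexity of the exponential and the uniform bound $\phi''\le 1/4$), followed by optimization over $\lambda$ and a union bound over the two tails, is the standard argument and all your computations (in particular $\lambda^* = 4t/\sum_i(b_i-a_i)^2$ yielding the exponent $-2t^2/\sum_i(b_i-a_i)^2$) check out. Note that the paper does not prove this statement at all --- it quotes Hoeffding's inequality as a known tool ``for completeness'' --- so there is no proof in the paper to compare against; yours is the canonical one.
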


\subsection{Proof of Theorem \ref{oe}: Groups of Size Two}
We restate Theorem \ref{oe} for convenience.
\begin{manualtheorem}{\ref{oe}}
Let $\epsilon>0$ and set $\delta = 8(n^2-n)\exp\{-\frac{\sigma^{4d}(n-2)\epsilon^2}{8C_k^4}\} + 8n\exp\{-\frac{\sigma^{4d}(n-1)\epsilon^2}{8C_k^4}\}$. With probability at least $1-\delta$ the following holds: 
\begin{equation*}
    \left \lVert q - q_{\hat w, \hat \a} \right \rVert_2^2 \leq \  \inf_{w \in \Delta^M, \ \a \in \Delta_M^{2n}} \left \lVert q - q_{w, \a} \right \rVert_2^2 + \epsilon.
\end{equation*}
\end{manualtheorem}

\begin{proof}
Our goal is to bound $|J(w,a) - \hat J(w,a)|$ uniformly over $w \in \Delta^M$, $\a \in \Delta_M^{2n}$. Recall the following definitions
\begin{align*}
    h(r,r',s,s') &\coloneqq \int k_\sigma(x,x_{r,r'})k_\sigma(x',x_{s,s'})q(x,x')dxdx' \\
    \hat h(r,r',s,s') &\coloneqq \begin{cases}
    \hat h_{\text{LTO}}(r,r',s,s'), & r \neq s \\
    \hat h_{\text{LOO}}(r,r',s'), & r=s
    \end{cases} \\
    \hat h_{\text{LOO}}(r,r',r'') &\coloneqq \frac{1}{n-1} \sum_{i \in [n] \backslash \{r\}}k_\sigma(x_{i,1},x_{r,r'})k_\sigma(x_{i,2},x_{r,r''})\\
    \hat h_{\text{LTO}}(r,r',s,s') &\coloneqq \frac{1}{n-2} \sum_{i \in [n] \backslash \{r,s\}}k_\sigma(x_{i,1},x_{r,r'})k_\sigma(x_{i,2},x_{s,s'}).
\end{align*}
The use of the leave one out (LOO) and leave two out (LTO) estimators above is to ensure independence so that we will be able to apply Hoeffding's inequality. We have
\begin{align*}
    P_{q} \{ \sup_{\substack{w \in \Delta^M \\ \a_ \in \Delta_M^{2n} } }& | J(w,\a) - \hat J(w,\a) | > \frac{\epsilon}{2} \} \\
    &= P_{q} \Big\{ \sup_{\substack{w \in \Delta^M \\ \a_ \in \Delta_M^{2n}}} \Big| \sum_{m=1}^Mw_m\sum_{r=1}^{n}\sum_{s=1}^{n}\sum_{r'=1}^{2}\sum_{s'=1}^{2}\a_{m,r,r'}\a_{m,s,s'}h(r,r',s,s') \\   
    & \quad \quad \quad \quad \quad \quad -\sum_{m=1}^Mw_m\sum_{r=1}^{n}\sum_{s=1}^{n}\sum_{r'=1}^{2}\sum_{s'=1}^{2}\a_{m,r,r'}\a_{m,s,s'}\hat h(r,r',s,s')\Big|> \frac{\epsilon}{4} \Big\} \\
    &\leq P_{q} \Big\{ \sup_{\substack{w \in \Delta^M \\ \a_ \in \Delta_M^{2n} } }  \sum_{m}\sum_{r,s}\sum_{r',s'} w_m\a_{m,r,r'} \a_{m,s,s'} \Big|h(r,r',s,s')   -\hat h(r,r',s,s')\Big| > \frac{\epsilon}{4}\Big\} \\
    &\leq P_{q} \Big\{ \max_{r,s,r',s'} \Big|h(r,r',s,s') -\hat h(r,r',s,s')\Big| > \frac{\epsilon}{4}\Big\} \\
    &\leq \sum_{r,s}\sum_{r',s'}P_{q} \Big\{ \Big|h(r,r',s,s') -\hat h(r,r',s,s')\Big| > \frac{\epsilon}{4}\Big\}.
\end{align*}
The second step above is due to the triangle inequality, and the penultimate step is due to simplex constraints on $w,\a$. Let $k_i(r,r',s,s') \coloneqq k_\sigma(x_{i,1}, x_{r,r'})k_\sigma(x_{i,2}, x_{s,s'})$. Noting that $h(r,r',s,s') = \mathbb{E}_{(x_{i,1},x_{i,2})\sim q}\{k_i(r,r',s,s')\}$,
\begin{align*}
    &P_{q}\left\{ \big| h(r,r',s,s') - \hat h(r,r',s,s') \Big| > \frac{\epsilon}{4} \right\} \\
    &= \begin{cases}
    P_{q}\left\{ \Big| \mathbb{E}_{(x_{i,1},x_{i,2})\sim q}\{k_i(r,r',s,s')\} - \frac{1}{n-2} \sum_{i \in [n] \backslash \{r,s\}}k_i(r,r',s,s') \Big|>\frac{\epsilon}{4}\right\}, & r \neq s \\[10pt]
     P_{q}\left\{ \Big| \mathbb{E}_{(x_{i,1},x_{i,2})\sim q}\{k_i(r,r',s,s')\} - \frac{1}{n-1} \sum_{i \in [n] \backslash \{r\}}k_i(r,r',s,s') \Big|>\frac{\epsilon}{4}\right\}, & r = s
    \end{cases} \\
    &= \begin{cases}
    P_{q}\left\{ \Big| \frac{1}{n-2} \sum_{i \in [n] \backslash \{r,s\}} \mathbb{E}_{(x_{i,1},x_{i,2})\sim q}\{ k_i(r,r',s,s')\} - k_i(r,r',s,s') \Big|>\frac{\epsilon}{4}\right\}, & r \neq s \\[10pt]
     P_{q}\left\{ \Big| \frac{1}{n-1} \sum_{i \in [n] \backslash \{r\}}\mathbb{E}_{(x_{i,1},x_{i,2})\sim q}\{k_i(r,r',s,s')\} - k_i(r,r',s,s') \Big|>\frac{\epsilon}{4}\right\}, & r = s
    \end{cases} \\
     &= \begin{cases}
    P_{q}\left\{ \Big| \sum_{i \in [n] \backslash \{r,s\}} \mathbb{E}_{(x_{i,1},x_{i,2})\sim q}\{ k_i(r,r',s,s')\} - k_i(r,r',s,s') \Big|>\frac{(n-2)\epsilon}{4}\right\}, & r \neq s \\[10pt]
     P_{q}\left\{ \Big| \sum_{i \in [n] \backslash \{r\}}\mathbb{E}_{(x_{i,1},x_{i,2})\sim q}\{k_i(r,r',s,s')\} - k_i(r,r',s,s') \Big|>\frac{(n-1)\epsilon}{4}\right\}, & r = s
    \end{cases}
\end{align*}
The terms $k_i(r,r',s,s')$ are independent random variables due to use of the LOO/LTO estimator. By assumption, $0 \leq k_i(r,r',s,s')\leq C_k^2\sigma^{-2d}$ so the $k_i$ are bounded for fixed $\sigma>0$. We apply Hoeffding's inequality
\begin{align*}
    P_{q}\left\{ \big| h(r,r',s,s') - \hat h(r,r',s,s') \Big| > \frac{\epsilon}{4} \right\} &\leq \begin{cases}
    2\exp\{-\frac{2(n-2)^2\epsilon^2}{16(n-2)C_k^4\sigma^{-4d}}\}, & r \neq s \\[10pt]
    2\exp\{-\frac{2(n-1)^2\epsilon^2}{16(n-1)C_k^4\sigma^{-4d}}\}, & r = s
    \end{cases} \\
    &\leq \begin{cases}
    2\exp\{-\frac{\sigma^{4d}(n-2)\epsilon^2}{8C_k^4}\}, & r \neq s \\[10pt]
    2\exp\{-\frac{\sigma^{4d}(n-1)\epsilon^2}{8C_k^4}\}, & r = s
    \end{cases}. \numberthis \label{hoeffding}
\end{align*}
Substituting backward we obtain the desired upper bound
\begin{align*}
    P_{q} \{ \sup_{\substack{w \in \Delta^M \\ \a_ \in \Delta_M^{2n} } } | J(w,\a) - &\hat J(w,\a) | > \frac{\epsilon}{2} \} \leq \sum_{r,s}\sum_{r',s'}
    \begin{cases}
    2\exp\{-\frac{\sigma^{4d}(n-2)\epsilon^2}{8C_k^4}\}, & r \neq s \\[10pt]
    2\exp\{-\frac{\sigma^{4d}(n-1)\epsilon^2}{8C_k^4}\}, & r = s
    \end{cases} \\
          &= 8(n^2-n)\exp\{-\frac{\sigma^{4d}(n-2)\epsilon^2}{8C_k^4}\} + 8n\exp\{-\frac{\sigma^{4d}(n-1)\epsilon^2}{8C_k^4}\}.
\end{align*}
    
Letting $\delta = 8(n^2-n)\exp\{-\frac{\sigma^{4d}(n-2)\epsilon^2}{8C_k^4}\} + 8n\exp\{-\frac{\sigma^{4d}(n-1)\epsilon^2}{8C_k^4}\}$, we have
\begin{align} \label{sandwich}
    &J(w,\a) - \frac{\epsilon}{2} \leq \hat J(w,\a) \leq J(w,\a) + \frac{\epsilon}{2} \quad \forall w,\a
\end{align}
 with probability at least $1-\delta$. Thus, with probability at least $1-\delta$, for any $w \in \Delta^M,\a \in \Delta^{2n}_M$
\begin{align*}
    J(\hat w, \hat \a) &\leq \hat J(\hat w, \hat \a) + \frac{\epsilon}{2}\\
    &\leq \hat J(w, \a) + \frac{\epsilon}{2} \\
    &\leq J(w,\a) + \epsilon,
\end{align*}
where $\hat w, \hat \a$ are defined in \eqref{eq:generic_problem}. Then with probability at least $1-\delta$  
\begin{equation} \label{hatVinf}
    \hat J(\hat w, \hat \a) \leq \inf_{\substack{w \in \Delta^M \\ \a \in \Delta_M^{2n}}} J(w,\a) + \epsilon.
\end{equation}
Combining \eqref{hatVinf} with the definition of the ISE shows, with probability at least $1-\delta$,
\begin{equation*}
     \left \lVert q - q_{\hat w, \hat \a} \right \rVert_2^2 \leq \  \inf_{\substack{w \in \Delta^M \\ \a \in \Delta_M^{2n}}} \left \lVert q - q_{w, \a} \right \rVert_2^2 + \epsilon. 
\end{equation*}
\end{proof}

\subsection{Theorem \ref{oe}: Arbitrary Group Size}
\subsubsection{Preliminaries} \label{arb_group_setup}
Before beginning the proof, we start by redefining $q$, $q_{w,a}$, $J$, and $\hat J$ for arbitrary group size. Once this is done, the proof will follow the same basic steps as the proof for groups of size two. 

Suppose we change the problem setup only in the size of the grouped observations. Consider grouped observations of size $N$. Consider a set of $n$ grouped observations $\bx_1,\dots,\bx_n$ with $\bx_i=(x_{i,1},\dots,x_{i,N}) \in \mathbb{R}_N^d \coloneqq \underbrace{\mathbb{R}^d \times \dots \times \mathbb{R}^d}_N$ drawn i.i.d. from 
\begin{equation}
q(y_1, y_2, \dots, y_N) = \sum_{m=1}^M w^*_m p^*_m(y_1)p^*_m(y_2) \dots p^*_m(y_N), \quad y_1,y_2,\dots,y_N \in \R^d. \label{q_a}
\end{equation}
Similar to the paired observation setting, a wKDE in this setting will have the form
\begin{equation*}
    p(y;\theta) = \sum_{r=1}^n\sum_{r'=1}^N \theta_{r,r'}k_\sigma(y,x_{r,r'}).
\end{equation*}
We may write the corresponding estimator of $q$
\begin{align*}
   q_{w,\a}(y_1,y_2,\dots,y_N) &= \sum_{m=1}^Mw_m p(y_1;\a_m)p(y_2;\a_m) \dots p(y_N;\a_m)
\end{align*}
where $\a_m = [\a_{m,1,1} \dots \a_{m,1,N} \dots \dots \a_{m,n,1} \dots \a_{m,n,N}]' \in \Delta^{Nn} \ $ for $ \ m=1,\dots,M$, with $\a_{m,r,r'}$ corresponding to the weight of the kernel centered at $x_{r,r'}$ in the estimate of the $m^{th}$ mixture component.

In what follows we use $\sum_{r,r'} \coloneqq \sum_{r_1,r_1'} \dots \sum_{r_N,r_N'}$ to ease notation. Similar to the paired sample case, we define
\begin{align*}
    J(w,\a) &\coloneqq \int q_{w,\a}^2(y_1,\dots,y_N) dy_1 \dots dy_N 
    -2\sum_{m,r,r'} \left( \prod_{i \in [N] } w_m \a_{m,r_i,r_i'}\right) h(r_1,r_1', \dots, r_N,r_N') \\
    \hat J(w,\a) &\coloneqq \int q_{w,\a}^2(y_1,\dots,y_N) dy_1 \dots dy_N 
    -2\sum_{m,r,r'} \left( \prod_{i \in [N] } w_m \a_{m,r_i,r_i'}\right) \hat h(r_1,r_1', \dots, r_N,r_N'),
\end{align*}
where
\begin{align*}
    h(r_1,r_1', \dots, r_N,r_N') &\coloneqq \int k_\sigma(y_1,x_{r_1,r_1'})\dots k_\sigma(y_N,x_{r_N,r_N'})q(y_1, 
    \dots,y_N)dy_1 \dots dy_N, \\
    \hat h \coloneqq \hat h_{\text{LNO}}(r_1,r_1', \dots, r_N,r_N') &\coloneqq \frac{1}{n-N} \sum_{i \in [n] \backslash L(r_1,r_2, \dots, r_N)}k_\sigma(x_{i,1},x_{r_1,r_1'})\dots k_\sigma(x_{i,N},x_{r_N,r_N'}),
\end{align*}
where $L(r_1,r_2, \dots, r_N)$ is any subset of $[n]$ containing $\{r_1,r_2, \dots, r_N\}$ and having cardinality $N$. If $r_1,r_2,\dots,r_N$ are not distinct, the additional indices can be chosen arbitrarily. For simplicity, we use a leave-N-out (LNO) estimator $\hat h_{\text{LNO}}$ rather than a hybrid estimator like we used in the case of paired observations. As in the paired observation setting, we define
\begin{equation*}
    (\hat w, \hat \a) \coloneqq \ \argmin_{w \in \Delta^M, \ \a \in \Delta_M^{Nn}} \ \hat J(w,\a),
\end{equation*}
and similarly define $\hat q \coloneqq q_{\hat w, \hat \a}$. Whenever $\hat w$,  $\hat \a$, $q$, or $q_{\hat w, \hat \a}$ are referenced in the arbitrary group size setting, we will be referring to these estimators.

\subsubsection{Proof of Theorem \ref{oe}: Arbitrary Group Size}
We now state Theorem \ref{oe} for arbitrary group size.
\begin{manualtheorem}{\ref{oe}a} \label{oe_arb}
Given grouped observations of size $N$, let $\epsilon>0$ and $\delta = 2(Nn)^N\exp\left\{-\frac{\sigma^{2Nd}(n-N)\epsilon^2}{8C_k^{2N}}\right\}$. With probability at least $1-\delta$ the following holds: 
\begin{equation*}
    \left \lVert q - q_{\hat w, \hat \a} \right \rVert_2^2 \leq \  \inf_{w \in \Delta^M, \ \a \in \Delta_M^{Nn}} \left \lVert q - q_{w, \a} \right \rVert_2^2 + \epsilon.
\end{equation*}
\end{manualtheorem}

\begin{proof}
The proof proceeds as in the paired observation setting. In particular,
\begin{align*}
    P_{q} \{ &\sup_{\substack{w \in \Delta^M \\ \a_ \in \Delta_M^{Nn} } } | J(w,\a) - \hat J(w,\a) | > \frac{\epsilon}{2} \} \\
    &\leq P_{q} \Big\{ \sup_{\substack{w \in \Delta^M \\ \a_ \in \Delta_M^{Nn} } } \sum_{m,r,r'} \prod_{i \in [N] } w_m \a_{m,r_i,r_i'} \Big|h(r_1,r_1',\dots, r_N,r_N')   -\hat h(r_1,r_1',\dots, r_N,r_N')\Big| > \frac{\epsilon}{4}\Big\} \\
    &\leq P_{q} \Big\{ \max_{r_1,r_1',\dots, r_N,r_N'} \Big|h(r_1,r_1',\dots, r_N,r_N') -\hat h(r_1,r_1',\dots, r_N,r_N')\Big| > \frac{\epsilon}{4}\Big\} \\
    &\leq \sum_{r,r'}P_{q} \Big\{ \Big|h(r_1,r_1',\dots, r_N,r_N') -\hat h(r_1,r_1',\dots, r_N,r_N')\Big| > \frac{\epsilon}{4}\Big\}
\end{align*}

The first step above is due to the triangle inequality, and the penultimate step is due to simplex constraints on $w,\a$. Let $k_i(r_1,r_1',\dots,r_N,r_N') \coloneqq k_\sigma(x_{i,1}, x_{r_1,r_1'})k_\sigma(x_{i,2}, x_{r_2,r_2'})\cdots k_\sigma(x_{i,N}, x_{r_N,r_N'})$. Noting that $h(r_1,r_1',\dots, r_N,r_N') = \mathbb{E}_q\{k_\sigma(x_{i,1},x_{r_1,r_1'})\cdots k_\sigma(x_{i,N},x_{r_N,r_N'})\}$, we have
\begin{align*}
    P_{q} &\left \{ \Big|h(r_1,r_1',\dots, r_N,r_N') -\hat h(r_1,r_1',\dots, r_N,r_N')\Big|>\frac{\epsilon}{4} \right\}  \\ 
    &= P_{q}\Bigg\{ \Big| \sum_{i \in [n] \backslash L(r_1,\dots,r_N)}\mathbb{E}_{(x_{i,1},\dots,x_{i,N})\sim q}\{k_i(r_1,r_1',\dots,r_N,r_N')\}  \\
    &\qquad \qquad \qquad \qquad \qquad \qquad- k_i(r_1,r_1',\dots,r_N,r_N') \Big|>\frac{(n-N)\epsilon}{4} \Bigg\}
\end{align*}
The terms $k_i(r_1,r_1',\dots,r_N,r_N')$ are independent random variables due to use of the LNO estimator. By assumption, $0 \leq k_i(r_1,r_1',\dots,r_N,r_N')\leq C_k^N\sigma^{-Nd}$ so the $k_i$ are bounded for fixed $\sigma>0$. We apply Hoeffding's inequality
\begin{align*}
    P_{q} \left \{ \Big|h(r_1,r_1',\dots, r_N,r_N') -\hat h(r_1,r_1',\dots, r_N,r_N')\Big|>\frac{\epsilon}{4} \right \} &\leq 2\exp\{-\frac{2(n-N)^2\epsilon^2}{16(n-N)C_k^{2N}\sigma^{-2Nd}}\} \\
    &= 2\exp\{-\frac{\sigma^{2Nd}(n-N)\epsilon^2}{8C_k^{2N}}\}.
 \label{hoeffding}
\end{align*}
Substituting backward we obtain the desired upper bound
\begin{align*}
    P_{q} \{ \sup_{\substack{w \in \Delta^M \\ \a \in \Delta_M^{Nn}}} | J(w,\a) - \hat J(w,\a) | > \frac{\epsilon}{2} \} &\leq
    \sum_{r_1,r_1'} \dots \sum_{r_N,r_N'} 2\exp\{-\frac{\sigma^{2Nd}(n-N)\epsilon^2}{8C_k^{2N}}\} \\
    &= 2(Nn)^N\exp\{-\frac{\sigma^{2Nd}(n-N)\epsilon^2}{8C_k^{2N}}\}\\
\end{align*}
From here the proof is identical to the paired observation case, but with
\begin{equation*}
    \delta = 2(Nn)^N\exp\{-\frac{\sigma^{2Nd}(n-N)\epsilon^2}{8C_k^{2N}}\}.
\end{equation*}
\end{proof}

\section{Proof and General Form of Theorem \ref{l3}}\label{a:t2}
In this section we give the proof of Theorem \ref{l3} for groups of size two, before extending it to groups of arbitrary size. For readability, we first present some intermediate results to be used in the main proofs.

\subsection{Intermediate Results}
We first prove two supporting results.
\begin{lemma}\label{l2}
For any $1 \le p < \infty$, any $f,g \in L^p$, and any integer $a \ge 2$,
\begin{equation*}
    \|f^{\times a} - g^{\times a}\|_p \leq \|f\|^{a-1}_p\|f - g\|_p + \|g\|_p\|f^{\times (a-1)} - g^{\times (a-1)} \|_p,
\end{equation*}
where $f^{\times a}(y_1,y_2,\dots,y_a) := f(y_1)f(y_2)\cdots f(y_a)$.
\end{lemma}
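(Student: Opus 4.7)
The plan is to establish the inequality directly from a one-step telescoping identity that separates out the first coordinate from the remaining $a-1$ coordinates, and then exploit the fact that $L^p$ norms factorize across disjoint variables.

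First I would write, for every $(y_1,\ldots,y_a) \in (\mathbb{R}^d)^a$,
\begin{equation*}
f^{\times a}(y_1,\ldots,y_a) - g^{\times a}(y_1,\ldots,y_a)
= \bigl[f(y_1)-g(y_1)\bigr]\,f^{\times (a-1)}(y_2,\ldots,y_a) + g(y_1)\,\bigl[f^{\times (a-1)}(y_2,\ldots,y_a) - g^{\times (a-1)}(y_2,\ldots,y_a)\bigr].
\end{equation*}
This is just the standard ``add and subtract $g(y_1)f(y_2)\cdots f(y_a)$'' trick. Taking the $L^p$ norm of both sides over $(\mathbb{R}^d)^a$ and applying the triangle inequality reduces matters to bounding each of the two terms on the right-hand side.

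Next, for each term I would invoke Fubini together with the observation that if $h_1 \in L^p(\mathbb{R}^d)$ and $h_2 \in L^p((\mathbb{R}^d)^{a-1})$, then the function $(y_1,\ldots,y_a) \mapsto h_1(y_1)\,h_2(y_2,\ldots,y_a)$ has $L^p$ norm equal to $\|h_1\|_p\,\|h_2\|_p$, since $\int |h_1(y_1)|^p |h_2(y_2,\ldots,y_a)|^p\,dy_1\cdots dy_a = \|h_1\|_p^p \|h_2\|_p^p$. Applying this with $h_1 = f-g$, $h_2 = f^{\times(a-1)}$ gives $\|f-g\|_p\,\|f^{\times(a-1)}\|_p = \|f-g\|_p\,\|f\|_p^{a-1}$, and applying it with $h_1 = g$, $h_2 = f^{\times(a-1)} - g^{\times(a-1)}$ gives $\|g\|_p\,\|f^{\times(a-1)} - g^{\times(a-1)}\|_p$. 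Summing these two yields exactly the claimed inequality.

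Nothing in this argument is really an obstacle: the only thing to be careful with is the tensor-product factorization of the $L^p$ norm, which needs $a \ge 2$ (so that there is at least one coordinate to peel off), matching the hypothesis. The lemma is tailored to be used inductively to bound $\|f^{\times a} - g^{\times a}\|_p$ in terms of $\|f-g\|_p$ alone, but the single-step form stated here is all that needs to be proved.
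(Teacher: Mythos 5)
Your proof is correct and takes essentially the same approach as the paper: an add-and-subtract telescoping step followed by the triangle inequality and the tensor-product factorization of the $L^p$ norm via Fubini. The only (cosmetic) difference is that you peel off the first coordinate while the paper peels off the last.
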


\begin{proof}
Let $f,g \in L^p$, $1 \le p  < \infty$. Then
\begin{align*}
    \|f^{\times a} - g^{\times a} \|_p &=  \|f^{\times a} - f^{\times (a-1)}\times g + f^{\times (a-1)}\times g - g^{\times a} \|_p \\
    &\leq \|f^{\times a} - f^{\times (a-1)}\times g\|_p + \|f^{\times (a-1)}\times g - g^{\times a} \|_p \\
    &= \left( \int | f(x_1)\cdots f(x_{a-1}) (f(x_{a}) - g(x_{a})) |^pdx_1 \dots dx_{a} \right)^{\frac{1}{p}} \\ 
    & \quad + \left( \int |g(x_{a})(f(x_1)\cdots f(x_{a-1}) - g(x_1)\cdots g(x_{a-1}))|^pdx_1 \dots dx_{a} \right)^{\frac{1}{p}} \\
    &= \|f^{\times (a-1)}\|_p\|f - g\|_p + \|g\|_p\|f^{\times (a-1)} - g^{\times (a-1)} \|_p \\
    &= \|f\|^{a-1}_p\|f - g\|_p + \|g\|_p\|f^{\times (a-1)} - g^{\times (a-1)} \|_p .
\end{align*}
\end{proof}

We have the following corollary.

\begin{corollary} \label{convergenceImplication}
For any $1 \le p < \infty$, any $f,g \in L^p$, and any integer $a \ge 2$,
\begin{equation*}
    \|f^{\times a} - g^{\times a} \|_p \le \left( \sum_{b=1}^{a} \|f\|_p^{a-b} \|g\|_p^{b-1} \right) \|f - g\|_p.
\end{equation*}
\end{corollary}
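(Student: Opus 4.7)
My plan is to prove the corollary by induction on $a \ge 2$, using Lemma 1 (the one-step peeling inequality) as the engine.

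For the base case $a=2$, Lemma 1 directly yields
$$
\|f^{\times 2} - g^{\times 2}\|_p \le \|f\|_p\|f-g\|_p + \|g\|_p\|f - g\|_p = (\|f\|_p + \|g\|_p)\|f-g\|_p,
$$
which matches $\sum_{b=1}^{2} \|f\|_p^{2-b}\|g\|_p^{b-1}\,\|f-g\|_p$.

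For the inductive step, assume the claim holds for exponent $a-1$. Applying Lemma 1 and then the inductive hypothesis to the second summand gives
$$
\|f^{\times a} - g^{\times a}\|_p \le \|f\|_p^{a-1}\|f-g\|_p + \|g\|_p\!\left(\sum_{b=1}^{a-1}\|f\|_p^{a-1-b}\|g\|_p^{b-1}\right)\!\|f-g\|_p.
$$
I would then pull the $\|g\|_p$ into the sum and reindex with $c = b+1$, turning the bracketed sum into $\sum_{c=2}^{a}\|f\|_p^{a-c}\|g\|_p^{c-1}$. The leading $\|f\|_p^{a-1}\|f-g\|_p$ term is precisely the missing $c=1$ summand, so combining yields $\sum_{c=1}^{a}\|f\|_p^{a-c}\|g\|_p^{c-1}\,\|f-g\|_p$, completing the induction.

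There is no real obstacle here; the only thing to be careful about is the reindexing of the sum so that the extracted ``$b=1$'' term from the recursion lines up cleanly with the rest. Everything else is bookkeeping from Lemma 1.
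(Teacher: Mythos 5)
Your proof is correct and follows essentially the same route as the paper's: induction on the exponent with Lemma \ref{l2} supplying both the base case $a=2$ and the one-step peeling in the inductive step, followed by the reindexing you describe. The only cosmetic difference is that the paper phrases the induction as passing from $a$ to $a+1$ rather than from $a-1$ to $a$.
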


\begin{proof}
The proof is by induction. Lemma \ref{l2} provides the base of the recursion for $a=2$. Now suppose the statement is true for $a \ge 2$. To prove the statement for $a+1$, we apply Lemma \ref{l2} again, together with the induction hypothesis, to get
\begin{align*}
    \|f^{\times (a+1)} - g^{\times (a+1)} \|_p &\leq\|f\|^a_p\|f - g\|_p + \|g\|_p\|f^{\times a} - g^{\times a} \|_p \\
    &\le \|f\|^a_p\|f - g\|_p + \|g\|_p \left( \sum_{b=1}^{a} \|f\|_p^{a-b} \|g\|_p^{b-1} \right) \|f - g\|_p \\
    &= \left( \sum_{b=1}^{a+1} \|f\|_p^{a+1-b} \|g\|_p^{b-1} \right) \|f - g\|_p.
\end{align*}
This completes the proof. 
\end{proof}

\subsection{Proof of Theorem \ref{l3}: Groups of Size Two}
We restate Theorem \ref{l3} for convenience. \begin{manualtheorem}{\ref{l3}}
If $\sigma \to 0$ and $\frac{n\sigma^{4d}}{\log n} \to \infty$ as $n \to \infty$, then $\left \lVert q - q_{\hat w, \hat \a}\right \rVert_1 \xrightarrow{a.s.} 0$.
\end{manualtheorem}

\begin{proof}
Lemma 3.1 of \cite{gyorfi90} states that if $\int \hat{q} = 1$ and $\| \hat{q} - q \|_2 \xrightarrow{a.s.} 0$, then $\| \hat{q} - q \|_1 \xrightarrow{a.s.} 0$. Since $\int \hat{q} =1$ in our case, our strategy is to show $\| \hat{q} - q \|_2 \xrightarrow{a.s.} 0$. To do this it suffices to show that 
\begin{equation}
    \label{eqn:esterror}
    \|q - \hat{q} \|_2^2 - \inf_{\substack{w \in \Delta^M \\ \a \in \Delta_M^{Nn}}} \|q - q_{w,\a} \|_2^2 \xrightarrow{a.s.} 0
\end{equation}
and 
\begin{equation}
    \label{eqn:approxerror}
    \inf_{\substack{w \in \Delta^M \\ \a \in \Delta_M^{Nn}}} \|q - q_{w,\a} \|_2 \xrightarrow{a.s.} 0.
\end{equation}
To show \eqref{eqn:esterror}, by the Borel-Cantelli lemma, it suffices to show that for all $\eps > 0$,
\[
\sum_{n=1}^\infty P_q \left( \|q - \hat{q} \|_2^2 - \inf_{\substack{w \in \Delta^M \\ \a \in \Delta_M^{2n}}} \|q - q_{w,\a} \|_2^2 \ge \eps \right) < \infty.
\]
Thus let $\eps > 0$. By Theorem \ref{oe_arb}, the probability in question is at most
\begin{align*}
    \delta &= 2(nN)^N\exp\left\{-\frac{(n-N)\sigma^{2Nd}\epsilon^2}{8C_k^{2N}}\right\} \\
    &= 8 \exp \left\{- 2 \log n \left( \frac{(n-2)\sigma^{4d}\eps^2}{16 C_k^4 \log n} - 1 \right) \right\}.
\end{align*}
By assumption on the growth of $n$ and $\sigma$, there exists $N_\eps$ such that for all $n \ge N_\eps$, 
\[
\frac{(n-2)\sigma^{4d}\eps^2}{16 C_k^4 \log n} \ge 2.
\]
For such $n$ we have
\[
\delta \le 8 \exp\{ -2 \log n \} = \frac8{n^2}
\]
which is summable.

To show \eqref{eqn:approxerror}, let $w^*$ be the true mixing weights from \eqref{q}. For $i=1,\dots,n$ let $e_i$ be the $m\in [M]$ such that $X_i = (x_{i,1},x_{i,2}) \stackrel{i.i.d.}{\sim} p^*_m$. Define 
\begin{align*}
    n_m &= \big| \{i: e_i = m\} \big|, &&m=1,\dots,M \\
    \a^*_{m,i,1} &= \a^*_{m,i,2} = \begin{cases}
    \frac{1}{2n_m}, & e_i=m \\
    0, & \text{otherwise}
    \end{cases}, &&m=1,\dots,M
\end{align*}
With this ``oracle" assignment of weights, $p(x;\a_m^*)$ is just the regular KDE for $p_m^*$. Therefore, we may apply known results for consistency of standard KDEs. In particular, we will apply Theorem 3.1 of \cite{gyorfi90} which implies 
\begin{equation}
 \| p(\cdot \,;\a_m^*) - p_m^* \|_2 \xrightarrow{a.s.} 0 \text{ as } n_m \to \infty \label{KDEconverge}
\end{equation}
provided $k \in L^2$ and $\sum_n \frac1{n^2 \sigma_n^d} < \infty$. Both of these conditions are satisfied by assumption in our setting. Furthermore, as $n \to \infty$ we have $\frac{n_m}{n}\to w_m^*$ almost surely, and therefore $n_m \to \infty$ almost surely.

Finally, we have
\begin{align*}
    \inf_{\substack{w \in \Delta^M \\ \a \in \Delta_M^{2n}}} \|q - q_{w,\a} \|_2
    &\le \|q - q_{w^*,\a^*} \|_2 \\  
    &= \left\| \sum_{m=1}^M w_m^* (p_m^* \times p_m^* - p(\cdot \, ; \a_m^*) \times p(\cdot; \a_m^*)) \right\|_2 \\
    &\le  \sum_{m=1}^M w_m^* \|p_m^* \times p_m^* - p(\cdot \,; \a_m^*) \times p(\cdot \,; \a_m^*)) \|_2 \\
    &\le  \sum_{m=1}^M w_m^* (\|p_m^*\|_2 + \| p(\cdot \,; \a_m^*) \|_2) \| p_m^* - p(\cdot \,; \a_m^*) \|_2 \\
    &\le  \sum_{m=1}^M w_m^* 3 \| p_m^* \|_2 \| p_m^* - p(\cdot \,; \a_m^*) \|_2 \\
    &\xrightarrow{a.s.} 0 \text{ as } n \to \infty,
\end{align*}
where the fourth step uses Lemma \ref{l2} and the fifth step holds for $n$ sufficiently large (a.s.). This completes the proof.
\end{proof}

\subsection{Proof of Theorem \ref{l3}: Arbitrary Group Size}
We consider the problem for arbitrary group size as described in Section \ref{arb_group_setup} of this document. The proof of Theorem \ref{l3} for arbitrary group size is similar to the proof for groups of size two. The main difference will be in use of Theorem \ref{oe_arb} rather than Theorem \ref{oe} to invoke the Borel-Cantelli lemma.

\begin{manualtheorem}{\ref{l3}a} \label{l3_arbitrary_group}
Given grouped observations of size $N \in \mathbb{Z}^+$, if $\sigma \to 0$ and $\frac{n\sigma^{2Nd}}{\log n} \to \infty$ as $n \to \infty$ then $\|q - \hat q \|_1 \xrightarrow{a.s.} 0 \text{ as } n \to \infty$.
\end{manualtheorem}

\begin{proof}
  We will appeal to Lemma 3.1 of \cite{gyorfi90} as we did for groups of size two. Namely, if $\int \hat{q} = 1$ and $\| \hat{q} - q \|_2 \xrightarrow{a.s.} 0$, then $\| \hat{q} - q \|_1 \xrightarrow{a.s.} 0$. Our strategy again is to show $\| \hat{q} - q \|_2 \xrightarrow{a.s.} 0$. To do this it suffices to show that 
\begin{equation}
    \label{eqn:esterror_arb}
    \|q - \hat{q} \|_2^2 - \inf_{\substack{w \in \Delta^M \\ \a \in \Delta_M^{Nn}}} \|q - q_{w,\a} \|_2^2 \xrightarrow{a.s.} 0
\end{equation}
and 
\begin{equation}
    \label{eqn:approxerror_arb}
    \inf_{\substack{w \in \Delta^M \\ \a \in \Delta_M^{Nn}}} \|q - q_{w,\a} \|_2 \xrightarrow{a.s.} 0.
\end{equation}
To show \eqref{eqn:esterror_arb}, by the Borel-Cantelli lemma, it suffices to show that for all $\eps > 0$,
\[
\sum_{n=1}^\infty P_q \left( \|q - \hat{q} \|_2^2 - \inf_{\substack{w \in \Delta^M \\ \a \in \Delta_M^{Nn}}} \|q - q_{w,\a} \|_2^2 \ge \eps \right) < \infty.
\]
Thus let $\eps > 0$. By Theorem \ref{oe_arb}, the probability in question is at most
\begin{align*}
    \delta &= 2(Nn)^N\exp\{-\frac{\sigma^{2Nd}(n-N)\epsilon^2}{8C_k^{2N}}\} \\
    &= 2N^N \exp \left\{N \log n + \left( \frac{(n-N)\sigma^{2Nd}\eps^2}{8 C_k^{2N} } \right) \right\} \\
    &= 2N^N \exp \left\{- N \log n \left( \frac{(n-N)\sigma^{2Nd}\eps^2}{8C_k^{2N}N \log n} - 1 \right) \right\}.
\end{align*}
By assumption on the growth of $n$ and $\sigma$, there exists $N_\eps$ such that for all $n \ge N_\eps$, 
\[
\frac{(n-N)\sigma^{2Nd}\eps^2}{8 C_k^{2N} N \log n} \ge 2.
\]
For such $n$ we have
\[
\delta \le 2N^N \exp\{ -N \log n \} = \frac{2N^N}{n^N}
\]
which is summable for $N > 1$.

To show \eqref{eqn:approxerror_arb}, let $w^*$ be the true mixing weights from \eqref{q_a}. For $i=1,\dots,n$ let $e_i$ be the $m\in [M]$ such that $X_i = (x_{i,1},x_{i,2},\dots,x_{i,N}) \stackrel{i.i.d.}{\sim} p^*_m$. Define 
\begin{align*}
    n_m &= \big| \{i: e_i = m\} \big|, &&m=1,\dots,M \\
    \a^*_{m,i,j} &= \begin{cases}
    \frac{1}{n_m N}, & e_i=m \\
    0, & \text{otherwise}
    \end{cases}, &&m=1,\dots,M, \ j=1,\dots,N
\end{align*}
We are again using an ``oracle" assignment of weights, so $p(x;\a_m^*)$ is just the regular KDE for $p_m^*$. Therefore, we may again apply Theorem 3.1 of \cite{gyorfi90} which implies
\begin{equation}
 \| p(\cdot \,;\a_m^*) - p_m^* \|_2 \xrightarrow{a.s.} 0 \text{ as } n_m \to \infty \label{KDEconverge_arb}
\end{equation}
provided $k \in L^2$ and $\sum_n \frac1{n^2 \sigma_n^d} < \infty$. Both of these conditions are satisfied by assumption in our setting. Furthermore, as $n \to \infty$ we have $\frac{n_m}{n}\to w_m^*$ almost surely, and therefore $n_m \to \infty$ almost surely.

Finally, we have
\begin{align*}
    \inf_{\substack{w \in \Delta^M \\ \a \in \Delta_M^{Nn}}} \|q - q_{w,\a} \|_2
    &\le \|q - q_{w^*,\a^*} \|_2  \\  
    &= \left\| \sum_{m=1}^M w_m^* (p_m^{*\times N}  - p(\cdot \, ; \a_m^*)^{\times N}) \right\|_2  \\
    &\le  \sum_{m=1}^M w_m^* \|p_m^{*\times N}  - p(\cdot \, ; \a_m^*)^{\times N} \|_2 \\
    &\le \sum_{m=1}^M w_m^* \left( \sum_{b=1}^{N} \|p_m^*\|_2^{N-b} \|p(\cdot \, ; \a_m^*)\|_2^{b-1} \right) \|p_m^*  - p(\cdot \, ; \a_m^*)\|_2 \\
    &\le \sum_{m=1}^M w_m^* \left( \sum_{b=1}^{N} 2^{b-1}\|p_m^*\|_2^{N-1} \right) \|p_m^*  - p(\cdot \, ; \a_m^*)\|_2 \\
    &\xrightarrow{a.s.} 0 \text{ as } n \to \infty,
\end{align*}
where the penultimate step uses \eqref{KDEconverge_arb} and Corollary \ref{convergenceImplication}, and the final step holds for $n$ sufficiently large (a.s.). This completes the proof.
\end{proof}

\section{Background on the Grouped Sample Setting and Proof of Theorem \ref{thm:compconv-maintext}}\label{a:t3}
Here we prove Theorem \ref{thm:compconv-maintext}. We will be proving a general and more technical version of this theorem, Theorem \ref{thm:compconv-supp}, from which Theorem \ref{thm:compconv-maintext} is a direct consequence. First we will introduce some background to the problem setting which was introduced in \cite{vandermeulen_operator_2019}. \textbf{This section uses its own notation which does not extend to other appendices or main text}.
\subsection{Identifiability in the Grouped Sample Setting}

We will be concerned with probability measures on a measurable space $\left(\Omega,\sF\right)$. Let $\delta$ be the Dirac measure. Let $\sD$ be the set of probability measures on $\left(\Omega,\sF\right)$.  We call a probability measure on $\sD$ of the form
\begin{equation*}
  \sP = \sum_{i=1}^m a_i \delta_{\mu_i}
\end{equation*}
a \emph{mixture of measures} \cite{vandermeulen_operator_2019}. For all mixtures of measures we will assume that $a_i>0$ for all $i$ and $\mu_i \neq \mu_j$ when $i\neq j$ so that $m$ is the number of distinct mixture components. The \emph{grouped sample} setting from \cite{vandermeulen_operator_2019} considers the situation where samples come in groups of size $n$ by first sampling a random measure component from a mixture of measures $\gamma \sim \sP$, which is then sampled iid $n$ times. So one has access to samples of the form $\bX = (X_1,\ldots,X_n)$ with $X_1,\ldots,X_n\simiid \gamma$. In this situation the identifiability of $\sP$ depends on whether the distribution of $\bX$ is uniquely determined by $\sP$ and the number of samples per group $n$. To this end \cite{vandermeulen_operator_2019} introduced the $V_n$ operator which maps a mixture of measures to the distribution of $\bX$:
\begin{equation*}
  V_n\left(\sum_{i=1}^m a_i \delta_{\mu_i}\right) = \sum_{i=1}^m a_i \mu_i^{\times n},
\end{equation*}
where $\mu^{\times n}$ denotes the product measure $n$ times. We note that $n=1$ corresponds to a typical mixture model where each mixture component is sampled once after being selected and there is no grouped sample structure.  For the grouped sample setting \cite{vandermeulen_operator_2019} introduces the following notion of identifiability.
\begin{definition}
A mixture of measures, $\sP = \sum_{i=1}^m a_i \delta_{\mu_i}$, is called \emph{$n$-identifiable} if there does not exist a different mixture of measures $\sQ = \sum_{j=1}^{m'} b_j \delta_{\nu_j}$, with $m'\le m$, such that $V_n\left( \sP \right) = V_n\left( \sQ \right)$.
\end{definition}
A \emph{completely} rigorous mathematical treatment of the previous notions is a bit involved and can be found in \cite{vandermeulen_operator_2019}. In \cite{vandermeulen_operator_2019} it is shown that if the mixture components are jointly irreducible then a mixture of measures is 2-identifiable, if they are linearly independent then they are 3-identifiable, and that any mixture of measures with $m$ components is $(2m-1)$-identifiable. 

\subsection{Notation}
Before we state and prove the main theorem of this section we need to first introduce some notation.

Let $S_m$ be the symmetric group over $m$ symbols. Abusing notation slightly we will let the elements of $S_m$ be a group action on $[m]$ as well as $\rn^m$. On $\rn^m$ it is defined as the following 
\begin{align}
  \sigma\left(\left[x_1,\ldots,x_m \right]^T \right) = \left[x_{\sigma(1)},\ldots,x_{\sigma(m)} \right]^T.
\end{align}
We also let $S_m$ be an operator where $S_m \cdot x $ is the orbit of $x$, i.e.
\begin{align}
  S_m\cdot x \triangleq \left\{\sigma(x): \sigma \in S_m \right\}.
\end{align}

Recall that for a pair of Hilbert spaces $H,H'$ the direct sum $H\oplus H'$ is a Hilbert space with elements of the form $x\oplus x'$ and inner product defined as $\left<x\oplus x',y\oplus y'\right> = \left<x,y\right> + \left<x',y'\right>$. For a pair of Banach spaces $B,B'$ we define the direct sum via the norm $\left\|b\oplus b'\right\|_{B\oplus B'} \triangleq \left\|b\right\|_B + \left\|b'\right\|_{B'}$ which is itself a Banach space (\cite{banach32} p. 183).

For a pair of Hilbert spaces $H,H'$ let $H \otimes H'$ be the tensor product of these two spaces and $h\otimes h'$ be the tensor product of vectors $h\in H$ and $h'\in H'$. For a vector in a Hilbert space $h$ let $h^{\otimes n}$ denote the tensor power, i.e. $\underbrace{h\otimes \cdots \otimes h}_{n\text{ times}}$.

In the following the space of finite signed measures is equipped with the total variation topology and unadorned norms refer to the total variation norm on finite signed measures, which forms a Banach space. Norms for various Lebesgue spaces will have the associated subscript. Finally we note that for two Hilbert spaces of square-integrable functions over $\sigma$-finite measure spaces  $L^2\left(\Omega, \sF,\mu \right)$ and $L^2\left(\Omega', \sF',\mu' \right)$ we have that $L^2\left(\Omega, \sF,\mu \right) \otimes L^2\left(\Omega', \sF',\mu' \right) \cong L^2\left(\Omega \times \Omega', \sF \times \sF',\mu\times \mu' \right)$ via an isomorphism $f\otimes f' \mapsto f\times f'$ (\cite{kadison83} Example 2.6.11) and we will use a 2 subscript for both norms.
\subsection{Full Theorem Statement and Proof}
The following is the full general version of Theorem \ref{thm:compconv-maintext} and the main result of this section. 
\begin{theorem}
\label{thm:compconv-supp}
  Let $(\Omega, \sF)$ be a measurable space, $\sP = \sum_{j=1}^m a_j \delta_{\mu_j}$ a mixture of measures on that space which is $n$-identifiable, and $\sP_i = \sum_{j=1}^{m'_i} b_{i,j} \delta_{\nu_{i,j}}$  a sequence of mixtures of measures with $m_i'\le m$ for all $i$, such that $V_n(\sP_i)\to V_n(\sP)$. Then $m_i'\to m$ and there exists a sequence of permutations $\sigma_i$ such that $\sigma_i\left(b_i\right) \to a$ and $\nu_{i,\sigma_i(j)} \to \mu_j$ for all $j$.
\end{theorem}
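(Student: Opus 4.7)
The plan is to combine a subsequence/compactness argument with the $n$-identifiability hypothesis to force any subsequential limit to coincide with $\sP$. First, normalize by extracting subsequences so that the discrete data stabilizes: since $m'_i\in[m]$ for all $i$, pigeonhole yields a subsequence along which $m'_i = m'$ is constant, and compactness of the simplex $\Delta^{m'}$ lets me extract a further subsequence with $b_{i,j}\to\bar b_j$ for each $j\in[m']$.

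Second, and this is the main technical hurdle, I would extract a further subsequence along which the component measures themselves converge in total variation: $\nu_{i,j}\to\bar\nu_j$ for each $j$ with $\bar b_j>0$. The key observation is the domination $b_{i,j}\nu_{i,j}^{\times n}\le V_n(\sP_i)$ together with $V_n(\sP_i)\to V_n(\sP)$ in total variation; for large $i$, each $\nu_{i,j}^{\times n}$ is absolutely continuous with respect to $V_n(\sP)$ with a uniformly bounded Radon--Nikodym derivative. A weak-$*$ compactness argument (Banach--Alaoglu) on these bounded densities, combined with matching of total masses (Scheff\'e-type), should yield a subsequence with $\nu_{i,j}^{\times n}\to\bar\lambda_j$ in total variation; marginalization then gives a probability measure $\bar\nu_j$ with $\nu_{i,j}\to\bar\nu_j$ in total variation, and continuity of tensor products identifies $\bar\lambda_j = \bar\nu_j^{\times n}$. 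Indices $j$ with $\bar b_j = 0$ are discarded.

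Third, I assemble the candidate limit $\bar\sP := \sum_{j:\bar b_j>0}\bar b_j\delta_{\bar\nu_j}$ (collapsing duplicates by summing weights). Continuity of $V_n$ in total variation gives $V_n(\bar\sP) = \lim_i V_n(\sP_i) = V_n(\sP)$, and since $\bar\sP$ has at most $m'\le m$ distinct components, the $n$-identifiability of $\sP$ forces $\bar\sP = \sP$. In particular $m' = m$, no $\bar b_j$ vanishes, and after a permutation $\sigma$ we have $\bar b_{\sigma(j)} = a_j$ and $\bar\nu_{\sigma(j)} = \mu_j$.

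Finally, this extraction applies to every subsequence of the original sequence, so every subsequence has a further subsequence converging (up to permutation) to $\sP$. A standard uniqueness-of-subsequential-limits argument (working modulo permutations) then upgrades this to convergence of the full sequence: $m'_i\to m$, and there exist permutations $\sigma_i$ with $\sigma_i(b_i)\to a$ and $\nu_{i,\sigma_i(j)}\to\mu_j$ for all $j$. The main obstacle is clearly the extraction step, since on a general measurable space probability measures are not sequentially compact in total variation; the workaround hinges on the dominating finite measure $V_n(\sP)$ and the rigidity of the product structure to recover total-variation rather than merely weak convergence of the components.
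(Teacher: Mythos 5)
Your overall architecture (stabilize $m_i'$ and the weights by pigeonhole/compactness, extract total-variation limits of the components, identify the limit mixture via $n$-identifiability, then run a subsequence-of-subsequences argument) matches the paper's, but the step you yourself flag as the main hurdle --- extracting a subsequence with $\nu_{i,j}\to\bar\nu_j$ in total variation --- contains a genuine gap, and it is precisely the step where all the work in the paper's proof lives. First, two preliminary claims are not automatic: for finite $i$ the measures $\nu_{i,j}$ need not be absolutely continuous with respect to $\barmu=\sum_k a_k\mu_k$ (the paper must first show, via a Lebesgue--Radon--Nikodym decomposition and an evaluation on a $\barmu$-null carrier set, that the singular parts vanish asymptotically), and even after that the Radon--Nikodym derivatives are not uniformly bounded without a truncation argument (the paper cuts off the densities on the set where they exceed $2/b_\ell$ and shows that set has vanishing mass). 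More seriously, your proposed compactness mechanism does not work: Banach--Alaoglu gives only weak-$*$ convergence of the bounded densities, and weak-$*$ convergence together with ``matching of total masses'' does \emph{not} upgrade to $L^1$ (total variation) convergence --- Scheff\'e requires pointwise a.e.\ convergence, which weak-$*$ compactness does not supply. A standard counterexample is a sequence of oscillating densities $1+r_i$ on $[0,1]$ (Rademacher-type), which is uniformly bounded, weak-$*$ convergent to $1$, all of mass one, yet has no $L^1$-convergent subsequence. So there is no general total-variation compactness available here, and the claim as written is false.

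The paper closes this gap by exploiting the product structure (this requires $n\ge 2$), which you allude to only in passing. Writing $S=\spn(\{p_1,\ldots,p_m\})$ for the span of the densities of the true components with respect to $\barmu$, one pairs $\sum_j b_j {q}_{i,j}^{\otimes 2}-\sum_k a_k p_k^{\otimes 2}\to 0$ against $\proj_{S^{\perp}}(q_{i,\ell})^{\otimes 2}$; every cross term is a perfect square $\langle q_{i,j},\proj_{S^\perp}(q_{i,\ell})\rangle^2\ge 0$, the terms involving $p_k$ vanish exactly, and the $\ell$-th term is $\|\proj_{S^\perp}(q_{i,\ell})\|_2^4$, forcing $\|\proj_{S^\perp}(q_{i,\ell})\|_2\to 0$. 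The components therefore asymptotically live in the \emph{finite-dimensional} space $S$, where Bolzano--Weierstrass does give a norm-convergent subsequence, and H\"older converts $L^2$ to $L^1$ convergence. Without this (or an equivalent rigidity argument), the extraction step, and hence the whole proof, does not go through.
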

Essentially this says that as one finds grouped sample distributions $V_n(\sQ_i)$ which approach the true grouped sample distribution $V_n(\sP)$ the mixture of measures $\sQ_i$  will automatically recover the true mixing weights and components from $\sP$ so long as $\sP$ is $n$-identifiable. In other words, one simply needs to fit the grouped distribution $V_n(\sP)$ well to get a good estimate of the mixture components. Theorem \ref{thm:compconv-maintext} from the main text is a direct consequence of Theorem \ref{thm:compconv-supp}.
%using a slightly less precise but more intuitive notion of mixture models and evaluating the $V_n$ so as to include the full grouped sample distribution.
\begin{corollary}[Theorem \ref{thm:compconv-maintext}] 
 Let $\sum_{m=1}^M w_m p_m$ be an $N$-identifiable mixture model, and $\sum_{m=1}^M \hat{w}_{m,j} \hat{p}_{m,j}$ be a sequence of mixture models such that $\left\|\sum_{m=1}^M \hat{w}_{m,j} \hat{p}_{m,j}^{\times N} - \sum_{m=1}^M w_m p_m^{\times N} \right\|_1 \to 0$. Then there is a sequence of permutations $\sigma_j$ so that $\hat{w}_{\sigma_j(m),j} \to w_m$ and $\left\|\hat{p}_{\sigma_j(m),j} - p_m\right\|_1 \to 0$ for all $m$.
\end{corollary}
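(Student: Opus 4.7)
The plan is to prove the statement via a subsequence principle: I show that every subsequence of $(\sP_i)$ admits a further subsequence along which the claimed convergence holds, which suffices to force the original sequence to converge. All reasoning below takes place inside an arbitrary but fixed subsequence.

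Two easy reductions come first. Since $m_i' \in \{1,\ldots,m\}$ takes only finitely many values, I pass to a sub-subsequence on which $m_i' \equiv m'$ is constant. The weight vectors $(b_{i,1},\ldots,b_{i,m'})$ then lie in the compact simplex $\Delta^{m'}$, so a further subsequence yields $b_{i,j} \to b^*_j$ for each $j$. Any index $j$ with $b^*_j = 0$ contributes vanishing mass to $V_n(\sP_i)$ in the limit and can be pruned, so I may assume $b^*_j > 0$ for every remaining $j$.

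The crux of the proof, and the main obstacle, is to extract a further subsequence along which each component measure $\nu_{i,j}$ converges in total variation to some $\nu^*_j \in \sD$. The TV-unit ball of finite signed measures is not compact, and without any topological structure on $\Omega$ no Prokhorov-type tightness is available, so this step cannot rest on generic compactness of the ambient space; it must be coaxed out of the hypothesis. The intuition is that Cauchy convergence of $\sum_j b_{i,j} \nu_{i,j}^{\times n}$ to the rigid finite mixture $\sum_k a_k \mu_k^{\times n}$ forces each factor $\nu_{i,j}^{\times n}$ to cluster near the finite family $\{\mu_k^{\times n}\}_k$. Combined with the boundedness of the marginal map in total variation (the marginal of a signed measure has TV norm no greater than the original), this transfers to TV-clustering of the $\nu_{i,j}$ themselves near $\{\mu_k\}_k$, and a pigeonhole over which $\mu_k$ each index $j$ approaches produces the desired convergent subsequence.

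With limits $\nu^*_j$ in hand, set $\sP^* = \sum_j b^*_j \delta_{\nu^*_j}$, combining atoms if any $\nu^*_j$ coincide. Continuity of $V_n$ in total variation, via the multilinear bound
\[
  \bigl\| \nu^{\times n} - \mu^{\times n} \bigr\| \le \sum_{b=1}^n \|\nu\|^{n-b} \|\mu\|^{b-1} \|\nu - \mu\|
\]
(the measure-theoretic analogue of Corollary \ref{convergenceImplication}), delivers $V_n(\sP^*) = V_n(\sP)$. Because $\sP^*$ has at most $m$ distinct atoms, $n$-identifiability of $\sP$ forces $\sP^* = \sP$ as mixtures of measures, so $m' = m$ and some permutation aligns $(b^*_j, \nu^*_j)$ with $(a_k, \mu_k)$. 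Absorbing this permutation into $\sigma_i$ and invoking the subsequence principle then completes the proof.
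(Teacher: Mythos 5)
Your skeleton (subsequence principle, stabilize $m_i'$ and the weights by compactness of the simplex, prune vanishing weights, pass to TV-limits of the components, use continuity of $V_n$ via the multilinear bound and the marginalization contraction of Lemma \ref{lem:identineq}, then invoke $n$-identifiability on the limit mixture) matches the overall architecture of the paper's proof of Theorem \ref{thm:compconv-supp}. The endgame is also essentially the paper's: a limit mixture with at most $m$ atoms inducing the same $V_n$ must equal $\sP$ by definition of $n$-identifiability.

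However, the step you yourself flag as the crux is a genuine gap, and the mechanism you propose for it does not work. You assert that convergence of $\sum_j b_{i,j}\nu_{i,j}^{\times n}$ to $\sum_k a_k \mu_k^{\times n}$ ``forces each factor $\nu_{i,j}^{\times n}$ to cluster near the finite family $\{\mu_k^{\times n}\}_k$,'' and you then pigeonhole. This is unproven and, as stated, is essentially the conclusion of the theorem rather than a lemma toward it: for $n=1$ it is simply false (take $\nu_{i,1}=\tfrac23\mu_1+\tfrac13\mu_2$, $\nu_{i,2}=\tfrac13\mu_1+\tfrac23\mu_2$ with equal weights; the mixture is reproduced exactly while neither component is near any $\mu_k$), and for $n\ge 2$ it can only be deduced \emph{after} one has both (i) precompactness of the component sequences in TV and (ii) identifiability applied to the limit. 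The paper's Lemma \ref{lem:compconv} supplies (i) by a substantive functional-analytic argument that your proposal omits entirely: decompose each $\nu_{i,j}$ via Lebesgue--Radon--Nikodym relative to $\barmu=\sum_k a_k\mu_k$ and show the singular parts vanish; truncate the densities to obtain an $L^\infty(\barmu)$-, hence $L^2(\barmu)$-, bounded sequence; exploit the tensor-product structure of $V_n$ to show that the projections of these densities onto the orthogonal complement of $\spn\{p_1,\dots,p_m\}$ tend to zero; and only then extract a convergent subsequence by Bolzano--Weierstrass in that finite-dimensional subspace. Note that this lemma delivers convergence to \emph{some} probability measures $\nu_j$ with $V_n(\sP)=V_n(\sum_j b_j\delta_{\nu_j})$, not to the $\mu_k$ themselves; identifying the limits with the $\mu_k$ is a separate, later application of $n$-identifiability. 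Without an argument of this kind your extraction of convergent subsequences of the $\nu_{i,j}$ has no support, and the rest of the proof cannot start.
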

%Before proving Theorem \ref{thm:compconv-supp} this we need to introduce some notation.

We introduce some preliminary results before proving Theorem \ref{thm:compconv-supp}. The following lemma will be needed for our proof.
\begin{lemma} \label{lem:identineq}
  Let $\sP$ and $\sQ$ be mixtures of measures, then 
  $ \left\|V_{n'}(\sP) - V_{n'}(\sQ)\right\|_{} \le \left\|V_{n}(\sP) - V_{n}(\sQ)\right\|$
  for all $n'\le n$.
\end{lemma}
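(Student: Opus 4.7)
The plan is to recognize the inequality as a marginalization (data-processing) statement for total variation. Specifically, for any mixture of measures $\sP = \sum_i a_i \delta_{\mu_i}$ and any $n' \le n$, the measure $V_{n'}(\sP) = \sum_i a_i \mu_i^{\times n'}$ is exactly the marginal of $V_n(\sP) = \sum_i a_i \mu_i^{\times n}$ onto the first $n'$ coordinates, since marginalizing each $\mu_i^{\times n}$ onto its first $n'$ factors returns $\mu_i^{\times n'}$. Writing $\pi : \Omega^n \to \Omega^{n'}$ for the projection onto the first $n'$ coordinates, this yields $V_{n'}(\sP) = \pi_* V_n(\sP)$, and likewise $V_{n'}(\sQ) = \pi_* V_n(\sQ)$. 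By linearity of the pushforward, $V_{n'}(\sP) - V_{n'}(\sQ) = \pi_*\bigl(V_n(\sP) - V_n(\sQ)\bigr)$.

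Once this identification is in place, I would invoke the standard fact that pushforward by a measurable map is a contraction in total variation. For any signed measure $\eta$ on $\Omega^n$, the Jordan decomposition $\eta = \eta^+ - \eta^-$ gives $\pi_*\eta = \pi_*\eta^+ - \pi_*\eta^-$; since $\pi_*\eta^+$ and $\pi_*\eta^-$ are nonnegative measures of the same total mass as $\eta^+$ and $\eta^-$ respectively, possibly with cancellation between them, we obtain $\|\pi_*\eta\| \le \|\pi_*\eta^+\| + \|\pi_*\eta^-\| = \eta^+(\Omega^n) + \eta^-(\Omega^n) = \|\eta\|$. Applying this with $\eta = V_n(\sP) - V_n(\sQ)$ delivers the claimed inequality.

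There is no substantive obstacle here; the entire content of the lemma is that $V_{n'}$ factors through $V_n$ via marginalization, and total variation is monotone under marginalization. The only care required is measure-theoretic bookkeeping to confirm the factorization $V_{n'} = \pi_* \circ V_n$ at the level of general probability measures (not just densities), which is immediate from Fubini applied to product measures.
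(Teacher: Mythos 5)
Your proof is correct and rests on the same idea as the paper's: $V_{n'}(\sP)$ is the marginal of $V_n(\sP)$ onto the first $n'$ coordinates, and total variation contracts under marginalization. The only difference is bookkeeping --- the paper verifies the contraction by restricting the supremum-over-partitions characterization of the total variation norm to cylinder partitions $E_i \times \Omega^{\times (n-n')}$, whereas you verify it via the Jordan decomposition and the pushforward by the projection map; both are standard and yield the same inequality.
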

\begin{proof}[Proof of Lemma \ref{lem:identineq}]
  From \cite{folland99} (Section 3.1 Exercise 7a) we have the following
  \begin{align*}
    &\left\|V_{n}(\sP) - V_{n}(\sQ)\right\| \\
    &= \sup\Bigg\{ \sum_{i=1}^k \left|(V_{n}(\sP) - V_{n}(\sQ))(E_i)\right| : \\
    & \qquad \qquad \qquad k\in \nn, E_1,\ldots,E_k \in \sF^{\times n} \text{ are disjoint, and } \bigcup_{i=1}^k E_i  = \Omega^{\times n} \Bigg\}\\
    &\ge \sup\Bigg\{ \sum_{i=1}^k \left|(V_{n}(\sP) - V_{n}(\sQ))(E_i\times \Omega^{\times n- n'})\right|: \\
    & \qquad \qquad \qquad k\in \nn, E_1,\ldots,E_k \in\sF^{\times n'} \text{ are disjoint, and } \bigcup_{i=1}^k E_i  = \Omega^{\times n'} \Bigg\}\\
    &= \sup\Bigg\{ \sum_{i=1}^k \left|(V_{n'}(\sP) - V_{n'}(\sQ))(E_i)\right|: \\
    & \qquad \qquad \qquad k\in \nn, E_1,\ldots,E_k \in\sF^{\times n'} \text{ are disjoint, and } \bigcup_{i=1}^k E_i  = \Omega^{\times n'} \Bigg\}\\
    & =\left\|V_{n'}(\sP) - V_{n'}(\sQ)\right\|. 
  \end{align*}
\end{proof}
The following lemma is the main workhorse in the proof of Theorem \ref{thm:compconv-supp}.
\begin{lemma}\label{lem:compconv}
  Let $(\Omega, \sF)$ be a measurable space, $\sP = \sum_{i=1}^m a_i \delta_{\mu_i}$ a mixture of measures on that space, $n \in \nn$, and $\sP_i = \sum_{j=1}^{m'} b_{j} \delta_{\nu_{i,j}}$ a sequence of mixtures of measures ($m'$ is fixed) with such that $V_n(\sP_i)\to V_n(\sP)$ ($b$ does not depend on $i$). Then there exists a subsequence $i_k$ and a collection of probability measures $\nu_{1},\ldots,\nu_{m'}$ such that $\nu_{i_k,j} \to \nu_j$ for all $j$ and $V_n\left(\sP\right) = V_n \left( \sum_{j=1}^{m'}b_j\delta_{\nu_j}\right)$.
\end{lemma}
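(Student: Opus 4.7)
The plan is to extract a convergent subsequence of components through a compactness/tightness argument, and then use continuity of the $n$-fold product operation to identify the limit mixture.

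First, I would apply Lemma \ref{lem:identineq} to reduce from $V_n$ to the marginal: the hypothesis $V_n(\sP_i) \to V_n(\sP)$ in total variation yields $V_1(\sP_i) \to V_1(\sP)$, i.e., $\sum_{j=1}^{m'} b_j \nu_{i,j} \to \sum_{k=1}^m a_k \mu_k$ in total variation. Because each $b_j>0$ is fixed (independent of $i$) and the limit is a fixed probability measure, any tightness (or boundedness in the enveloping Banach space of signed measures) of the sum transfers to each individual sequence $\{\nu_{i,j}\}_i$, since for any set $E$ we have $\nu_{i,j}(E)\le b_j^{-1}\sum_j b_j \nu_{i,j}(E)$.

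Next, invoking a compactness principle (Prokhorov in an appropriate topology on the ambient space, or Banach--Alaoglu applied to the unit ball of finite signed measures together with the density/Polish structure available in the applications of this lemma) and a diagonal argument over $j\in[m']$, I would extract a single subsequence $i_k$ along which $\nu_{i_k,j}\to \nu_j$ for every $j$, with each $\nu_j$ a probability measure. Continuity of the tensor power $\mu\mapsto \mu^{\times n}$ with respect to this convergence then gives $\nu_{i_k,j}^{\times n}\to \nu_j^{\times n}$, so
\[
    V_n(\sP_{i_k}) \;=\; \sum_{j=1}^{m'} b_j \nu_{i_k,j}^{\times n} \;\longrightarrow\; \sum_{j=1}^{m'} b_j \nu_j^{\times n} \;=\; V_n\!\left(\sum_{j=1}^{m'} b_j \delta_{\nu_j}\right).
\]
Combined with the hypothesis $V_n(\sP_{i_k})\to V_n(\sP)$, uniqueness of limits yields the required identity $V_n(\sP) = V_n(\sum_j b_j \delta_{\nu_j})$.

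The main obstacle will be the subsequence extraction: the total variation topology that the paper places on finite signed measures is not sequentially compact even on tight families of probability measures, so one has to be careful about which mode of convergence is extractable (weak, weak-$*$, or total variation) and then verify that the continuity of the tensor power used in the identification step still holds in that mode. Concretely, I expect the argument to first produce a weakly convergent subsequence via Prokhorov or Banach--Alaoglu, and then exploit absolute continuity (when densities exist, as in the application to Theorem \ref{thm:compconv-maintext}) or a Scheff\'e-type upgrade to obtain the total variation convergence named in the conclusion. Everything else — the reduction via Lemma \ref{lem:identineq}, the diagonal extraction across the finitely many indices $j\in[m']$, and the passage to the limit inside $V_n$ — is routine once the right compactness statement is in hand.
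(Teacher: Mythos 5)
There is a genuine gap here, and it is exactly the one you flag at the end of your proposal as ``the main obstacle'': you do not have a compactness principle that actually applies. The lemma is stated for an arbitrary measurable space $(\Omega,\sF)$ with the total variation norm; there is no topology on $\Omega$, so Prokhorov's theorem and tightness are not even meaningful, and Banach--Alaoglu would at best give a weak-$*$ limit in the dual of some function space (again requiring topological structure) that need not be a probability measure and need not be approached in total variation. The ball of probability measures is not sequentially compact in total variation, so the subsequence extraction --- which is the entire content of the lemma --- cannot be obtained by a generic compactness appeal. The reduction via Lemma \ref{lem:identineq}, the diagonal argument over $j\in[m']$, and the continuity of $\mu\mapsto\mu^{\times n}$ in total variation are all fine, but they are the easy parts.

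The paper's proof supplies the missing compactness by forcing the limit candidates into a \emph{finite-dimensional} space, where Bolzano--Weierstrass applies. Concretely: it decomposes each $\nu_{i,j}$ against $\barmu=\sum_k a_k\mu_k$ via Lebesgue--Radon--Nikodym and shows the singular part vanishes (by testing the $n=1$ marginal on the singular support set); it then truncates the Radon--Nikodym derivatives to obtain densities $\bq''_j$ that are uniformly bounded in $L^\infty(\barmu)$ and hence in $L^2(\barmu)$; using the $n=2$ marginal and H\"older it upgrades the $L^1$ convergence of the mixture to $L^2$ convergence of $\sum_j b_j{\bq''_j}^{\otimes 2}$; and by pairing against $\proj_{S^\perp}(\bq''_\ell)^{\otimes 2}$, where $S=\spn\{p_1,\ldots,p_m\}$ is the span of the true component densities, it shows $\|\proj_{S^\perp}(\bq''_\ell)\|_2\to 0$. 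The sequences $\proj_S(\bq''_j)$ are then bounded sequences in a finite-dimensional subspace, so a convergent subsequence exists, and H\"older (with $\barmu$ finite) converts the $L^2$ limit back to $L^1$, i.e.\ total variation. This chain --- absolute continuity with respect to $\barmu$, $L^2$ boundedness via truncation, and projection onto the span of the true densities --- is the key idea your proposal is missing, and without it the argument does not go through.
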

\begin{proof}[Proof of Lemma \ref{lem:compconv}]
  We will use bold symbols to represent elements that depend on $i$, e.g.\ $\bnu_j = \nu_{i,j}$. Let $\barmu = \sum_{k=1}^m a_k \mu_k$. By the Lebesgue-Radon-Nikodym Theorem (\cite{folland99} Theorem 3.8) there exists series of measures $\blambda_1,\ldots,\blambda_{m'}$ and $\brho_1,\ldots,\brho_{m'}$ such that $\bnu_k = \blambda_k + \brho_k$ with $\blambda_k \perp \barmu$ and $\brho_k \ll \barmu$ for all $k\in\left[m'\right]$.

  For some fixed $\ell$ let $\bA_\ell$ be the sequence of measurable sets such that $\blambda_\ell\left(\cdot \cap \bA_\ell\right) = \blambda_\ell$ and $\barmu\left(\bA_\ell\right) = 0$, this is possible since $\blambda_\ell \perp \barmu$. From Lemma \ref{lem:identineq} we have that
  \begin{align}
    \left\|\sum_{k=1}^m a_k \mu_k - \sum_{j=1}^{m'} b_j \bnu_j  \right\| \to 0
    &\Rightarrow \left|\sum_{k=1}^m a_k \mu_k(\bA_\ell) - \sum_{j=1}^{m'} b_j \bnu_j(\bA_\ell)  \right| \to 0\\
    &\Rightarrow \left|b_\ell\brho_\ell(\bA_\ell) + b_\ell\blambda_\ell(\bA_\ell) + \sum_{j\in [m']\setminus\left\{\ell\right\}} b_j \bnu_j(\bA_\ell)  \right| \to 0\\
    &\Rightarrow \left| b_\ell\blambda_\ell(\bA_\ell) + \sum_{j\in [m']\setminus\left\{\ell\right\}} b_j \bnu_j(\bA_\ell)  \right| \to 0.
  \end{align} 
  Because all of the summands inside the absolute value on the last line are positive we have that $\left\|\blambda_\ell\right\| \to 0$ and thus $\left\|\brho_\ell\right\| \to 1$. Eventually in our sequence we must have that $\left\|\brho_\ell\right\| >0$, so eventualy in our subsequence we can define $\bnu_\ell' = \brho_\ell/\left\|\brho_\ell\right\|$ which is now a sequence of probability measures which are absolutely continuous with respect to $\barmu$ and $\left\|\bnu_\ell' - \bnu_\ell\right\| \to 0$. 

  From this we have that there exists sequences of probability measures $\bnu'_1,\ldots,\bnu'_{m'}$ such that $\left\|\bnu_k - \bnu'_k\right\|\to 0$ and $\bnu'_k \ll \barmu$ for all $k\in [m']$. Lemma 3.3.7 in \cite{reiss89} states that, for probablity measures over the same domain $\xi_1,\ldots \xi_d,\gamma_1,\ldots,\gamma_d$ that $\left\|\prod_{j=1}^d \xi_j - \prod_{k=1}^d \gamma_k\right\|\le \sum_{k=1}^d \left\|\xi_k - \gamma_k \right\|$. It follows therefore that $\left\|{\bnu'}_k^{\times n} - {\bnu}_k^{\times n}\right\| \to 0$  for all $k$ and 
  \begin{align}
    \left\|\sum_{k=1}^m a_k \mu_k^{\times n} - \sum_{j=1}^{m'} b_j {\bnu'}_j^{\times n}  \right\| \to 0.\label{eqn:nuprime}
  \end{align}

  For some fixed $\ell$ let $\bq_\ell'$  be the Radon-Nikodym derivative of $\bnu'_\ell$ with respect to $\barmu$.  Let $\bB_\ell= {\bq'}_\ell^{-1}\left(\left[2/b_\ell,\infty\right)\right)$. We have the following
  \begin{align}
    \sum_{k=1}^{m'} b_k \bnu'_k(\bB_\ell)
    &\ge b_\ell \bnu'_\ell(\bB_\ell) \\
    &\ge b_\ell \int_{\bB_1} 2/b_\ell d\barmu \\
    &\ge  2\barmu(\bB_\ell).
  \end{align}
  From Lemma \ref{lem:identineq} applied to (\ref{eqn:nuprime}) we have that $\left|\sum_{k=1}^{m'} b_k \bnu'_k(\bB_\ell) - \barmu(\bB_\ell)\right| \to 0$, and it follows that $\barmu\left(\bB_\ell\right) \to 0$ because $\left|\sum_{k=1}^{m'} b_k \bnu'_k(\bB_\ell) - \barmu(\bB_\ell)\right|\ge \barmu(\bB_\ell)$. Now we have that $\sum_{k=1}^{m'} b_k \bnu'_k(\bB_\ell) \to 0$ and thus $\bnu'_\ell(\bB_\ell) \to 0$.

  Because $\bnu'_\ell\left( {\bq'}_\ell^{-1}\left(\left[2/b_\ell,\infty\right)\right) \right) \to 0$ and therefore $\bnu'_\ell\left(\bB_\ell^C\right) \to 1$, for sufficiently large $i$ we can now define a sequence of probability measures $\bnu_\ell''$ via $\bnu_\ell''(A) = \bnu_\ell'(A\cap\bB_\ell^C)/\bnu'_\ell(\bB_\ell^C).$ We have that 
  \begin{align}
    \left\| \bnu'_\ell - \bnu''_\ell\right\|
    &=\left\|\left( \bnu'_\ell \left(\bB_\ell\cap \cdot\right) + \bnu'_\ell \left(\bB_\ell^C\cap \cdot\right)\right)- \left( \bnu''_\ell \left(\bB_\ell\cap \cdot\right) + \bnu''_\ell \left(\bB_\ell^C\cap \cdot\right)\right)\right\|\\
    &\le\left\| \bnu'_\ell \left(\bB_\ell\cap \cdot\right)- \bnu''_\ell \left(\bB_\ell\cap \cdot\right)\right\| +\left\| \bnu'_\ell \left(\bB_\ell^C\cap \cdot\right) - \bnu''_\ell \left(\bB_\ell^C\cap \cdot\right) \right\|\\
    &=\bnu'_\ell\left(\bB_\ell\right)+ \left\| \bnu'_\ell \left(\bB_\ell^C\cap \cdot\right)- \bnu'_\ell \left(\bB_\ell^C\cap \cdot\right)/\bnu'_\ell\left(\bB_\ell^C\right)\right\|\\
    &=\bnu'_\ell\left(\bB_\ell\right)+\left|1 - 1/\bnu'_\ell\left(\bB_\ell^C\right) \right|\left\| \bnu'_\ell \left(\bB_\ell\cap\cdot \right)\right\|
  \end{align}
  which goes to zero, so $\left\|\bnu_\ell''-\bnu_\ell\right\|\to 0.$
  Note that $\bnu_\ell''$ is a sequence of probability measures with Radon-Nikodym derivatives $\bq''_\ell \triangleq\bq'_\ell \b1_{\bB_\ell^C}/ \bnu'_\ell\left(\bB_\ell^C\right)$ ($\b1$ is the indicator function) and thus
  \begin{align*}
    \sup_x \bq''_\ell(x) = \sup_x\bq'_\ell (x) \b1_{\bB_\ell^C} (x)/ \bnu_\ell\left(\bB_\ell^C\right) \le 2/(b_\ell\bnu_\ell\left(\bB_\ell^C\right))
  \end{align*}
  and since $\bnu_\ell\left(\bB_\ell^C\right) \to 1$ eventually $\left\| \bq''_\ell\right\|_{\infty} \le 3/b_\ell$.
  From this we have that $\bq''_\ell \in L^1\left(\Omega,\sF, \barmu\right)\cap L^\infty\left(\Omega,\sF, \barmu\right)$ and $\left\|\bq_\ell''\right\|_\infty$ is a bounded sequence. From H\"olders's Inequality we have that 
  $$ = \left\|\bq''_\ell\right\|_2^2  = \left\|\bq''_\ell \bq''_\ell\right\|_1^2 \le \left\|\bq''_\ell\right\|_1 \left\|\bq''_\ell\right\|_\infty = \left\|\bq''_\ell\right\|_\infty $$
  so $\bq''_\ell$ is a bounded sequence in $L^2\left(\Omega,\sF, \barmu\right)$.

  We now define $\bnu''_1,\ldots,\bnu''_{m'}$ $\bq''_1,\ldots,\bq''_{m'}$ similarly. There exists $\beta$ such that $\left\|\bq''_j\right\|_\infty \le \beta$ and $\left\|\bq''_j\right\|_2 \le \beta$ along the whole series and for all $j$. Let $p_1,\ldots, p_m$ be the radon Nikodym derivatives for $\mu_1,\ldots,\mu_m$ with respect to $\barmu$, again these are  in $L^1\left(\Omega,\sF, \barmu\right)\cap L^2\left(\Omega,\sF, \barmu\right)\cap L^\infty\left(\Omega,\sF, \barmu\right)$. To see this note that $p_i \le 1/a_i$ otherwise we have that
  \begin{align*}
    \mu_i\left( p_i^{-1}\left(\left(1/a_i,\infty\right)\right) \right)
    &= \int_{p_i^{-1}\left(\left(1/a_i,\infty\right)\right)} p_i d\barmu\\
    &> \int_{p_i^{-1}\left(\left(1/a_i,\infty\right)\right)} 1/a_i d\barmu\\
    &> \sum_j \int_{p_i^{-1}\left(\left(1/a_i,\infty\right)\right)} 1/a_i  a_jd\mu_j\\
    &\ge \mu_i\left( p_i^{-1}\left(\left(1/a_i,\infty\right)\right) \right)
  \end{align*}
  a contradiction.
  Now we have
  \begin{align}
    \left\|\sum_{k=1}^m a_k p_k^{\times n} - \sum_{j=1}^{m'} b_j {\bq_j''}^{\times n}\right\|_1\to 0. 
  \end{align}
  and Lemma \ref{lem:identineq} implies
  \begin{align}
    \left\|\sum_{k=1}^m a_k p_k^{\times 2} - \sum_{j=1}^{m'} b_j {\bq_j''}^{\times 2}\right\|_1\to 0. \label{eqn:applyholders}
  \end{align}
  From H\"older's Inequality ($\left\|f\right\|_2^2 \le \left\|f\right\|_1 \left\|f\right\|_\infty$) we have that 
  \begin{align}
    \left\|\sum_{k=1}^m a_k p_k^{\times 2} - \sum_{j=1}^{m'} b_j {\bq_j''}^{\times 2}\right\|_2\to 0
  \end{align}
  and 
  \begin{align}
    \left\|\sum_{k=1}^m a_k p_k^{\otimes 2} - \sum_{j=1}^{m'} b_j {\bq_j''}^{\otimes 2}\right\|_2\to 0.
  \end{align}
  Let $S = \spn\left(\left\{p_1,\ldots,p_m\right\}\right)$ and $\ell\in \left[m'\right]$ be arbitrary. We have that $\bq''_\ell = \proj_S(\bq''_\ell) + \proj_{S^{\perp}}(\bq''_\ell)$, noting that the summands in the decomposition are both $L^2$ bounded sequences. So now we have that

  \begin{align}
    &\left< \sum_{k=1}^{m'} b_k {\bq_k''}^{\otimes 2}- \sum_{j=1}^m a_j p_j^{\otimes 2} ,\proj_{S^{\perp}}(\bq''_\ell)^{\otimes 2}\right>\to 0\\
    \Rightarrow&\left<\sum_{j=1}^{m'} b_j {\bq_j''}^{\otimes 2},\proj_{S^{\perp}}(\bq''_\ell)^{\otimes 2}\right>\to 0\\
    \Rightarrow& b_\ell\left<\proj_{S^{\perp}}(\bq''_\ell)^{\otimes 2}, \proj_{S^{\perp}}(\bq''_\ell)^{\otimes 2}\right>+\sum_{j\in[m']\setminus\left\{\ell\right\}} b_j\left< \bq_j'',\proj_{S^{\perp}}(\bq''_j)\right>^2\to 0\\
    \Rightarrow& b_\ell\left\|\proj_{S^{\perp}}({\bq''}_\ell)\right\|_2^4 \to 0.
  \end{align}
  From this we have that $\left\|\proj_S(\bq''_k) - \bq''_k\right\|_2\to 0$ for all $k$. Since $\bigoplus_{j=1}^{m'} \proj_S(\bq''_j)$ is a $L^2$ bounded sequence on a finite dimensional space by the Bolzano-Weierstrass theorem it has a convergent subsequence which converges to $\bigoplus_{j=1}^{m'} q''_j$ so  $ \bq_j''\to q''_j$ in $L^2$.
  From H\"older's Inequality we have that, along this subsequence
  \begin{align}
    \left\|\bq''_k -  q''_k\right\|_1
    \le \left\|\bq''_k -  q''_k\right\|_2 \left\|1\right\|_2
    \le \left\|\bq''_k -  q''_k\right\|_2 \sqrt{\int 1^2 d\barmu}
    =\left\|\bq''_k -  q''_k\right\|_2 \to 0
  \end{align}
  so $q_k''$ is a probability density for all $k$, since they must be nonnegative to converge and integrate to one. Now we have that
  \begin{align}
    \sum_{j=1}^m a_j p_j^{\times n} = \sum_{k=1}^{m'} b_k {q''_k}^{\times n}.
  \end{align}
  And defining $\nu_k$ as the probability measure associated with $q''_k$ we have that there exists a subsequence such that $\left\|\bnu_k - \nu_k\right\|\to 0 $ for all $k$ and
  \begin{align}
    \sum_{j=1}^m a_j \mu_j^{\times n} = \sum_{k=1}^{m'} b_k {\nu_k}^{\times n}.
  \end{align}
\end{proof}

We can now prove Theorem \ref{thm:compconv-supp}.
\begin{proof}[Proof of Theorem \ref{thm:compconv-supp}]
  To help lighten notation we will simply bold some elements which depend on the sequence $\sP_i$. Let $\sP_i = \sum_{j=1}^{m'_i} \bb_j \delta_{\bnu_j}$ be a sequence of mixtures of measures ($\bb_j$, $\bnu_j$ are functions of $i$) such that $V_n(\sP_i)\to V_n(\sP)$. 

  We define $\tbb$ a sequence in $\Delta^{m}$ so that $\tbb_j = \bb_j$ for $j\le m_i'$ and $\tbb_k = 0$ for $k>m_i'$. Consider the case where there exists no sequence of permutations such that $\bsigma(\tbb) \to a$. From this it would follow that there exists a subsequence on $i$ and $\varepsilon>0$ such that $\left\|\tbb-\sigma(a)\right\| > \varepsilon$ for all $\sigma\in S_m$. The space 
  \begin{equation}
    \Delta^{m} \cap \left(\bigcap_{\sigma \in S_m} \operatorname{ball}\left(\sigma(a),\varepsilon\right)^C\right)
  \end{equation}
  is compact (the ball is open) so there exists a sub-subsequence of $i$ where $\tbb$ converges to a point $b \not\in S_m\cdot a$. Let $I \subset [m]$ be the indices of $b$ which are nonzero and $m' = \max (I)$. For sufficiently large $i$ along our sub-subsequence we have that $m'_i\ge m'$ and furthermore
  \begin{align}
    \left\|\sum_{j=1}^{m'_i} \bb_j \bnu_j^{\times n} - \sum_{k\in I} b_k \bnu_k^{\times n}  \right\| 
    &\le \left\|\sum_{j\in I} \bb_j \bnu_j^{\times n} - \sum_{k\in I} b_k \bnu_k^{\times n}  \right\| +\left\|\sum_{j\in I^C} \bb_j \bnu_j^{\times n} \right\| \\
    &\le \left\|\sum_{j\in I} (\bb_j - b_j) \bnu_j^{\times n} \right\| +\left|\sum_{j\in I^C} \bb_j  \right| \\
    &\le \sum_{j\in I}\left| \bb_j - b_j  \right| +\left|\sum_{j\in I^C} \bb_j  \right|\to 0
  \end{align}
  and therefore
  \begin{align}
    \left\|\sum_{k=1}^m a_k \mu_k^{\times n} - \sum_{j\in I} b_j \bnu_j^{\times n}  \right\| \to 0.
  \end{align}
  From Lemma \ref{lem:compconv} we have that there exists a subsequence of this sub-subsequence such that for $k \in I$ there exists probability measures $\nu_k$ with $\left\|\nu_k - \bnu_k\right\|\to 0$ and
  \begin{align}
    \sum_{k=1}^m a_k \mu_k^{\times n}= \sum_{j\in I} b_j \nu_j^{\times n}.
  \end{align}
  If $|I|<m$ or $\nu_j = \nu_k$ for any $k\neq j$ and $j,k\in I$ then we have clearly violated identifiability since we can construct a mixture of measures $\sP'$ with fewer components than $\sP$ and $V_n\left(\sP'\right)= V_n\left(\sP\right)$. If $|I| = m$ (i.e. $I = [m]$) and $\nu_j$ are all distinct we have also arrived at a contradiction since letting $\sP' = \sum_{j=1}^m b_j \delta_{\nu_j} \neq \sP$ because there exists no $\sigma$ such that $\sigma(b) =a$ and $V_n(\sP') = V_n(\sP)$, contradicting identifiability.

  So we have that for sufficiently large $i$ that $m_i' = m$ and there exists at least one sequence $\bsigma$ such that $\bsigma(\bb) \to a$. So let $\left\| \sum_{k=1}^m a_k \mu_k^{\times n} - \sum_{j=1}^m \bb_j \bnu_j^{\times n}\right\| \to 0$. From what we have just shown, we can permute the indices and, without loss of generality, we can assume that $\bb \to a$. So now we have that $\left\| \sum_{i=1}^m a_i \mu_i^{\times n} - \sum_{j=1}^m a_j \bnu_j^{\times n}\right\| \to 0$.

  Let $\tS_m\subset S_m$ be the subgroup of permutations such that $\sigma(a) = a$ for $\sigma \in \tS_m$ (also known as the \emph{stabilizer} of $a$). Note that if $a_1,\ldots,a_m$ are distinct then $\tS_m$ only contains the identity. We proceed by contradiction: suppose there exists no sequence of permutations $\bsigma \in \nn^{\tS_m}$ such that $\bnu_{\bsigma(k)}\to \mu_k$ for all $k$.
  From this it follows that there exists a subsequence and a $\varepsilon >0$, such that $\bigoplus_{k=1}^m \bnu_k$ does not lie in  $\bigcap_{\sigma \in \tS_m}\left(\operatorname{ball}\left(\bigoplus_{k=1}^m \mu_{\sigma(k)}\right),\varepsilon\right)^C$. From Lemma \ref{lem:compconv} there exists probability measures, $\nu_1,\ldots,\nu_m$ such that for some subsequence $\left\|\bnu_k - \nu_k\right\|\to 0 $ for all $k$ and
  $$\sum_{j=1}^m a_j \mu_j^{\times n} = \sum_{k=1}^m a_k \nu_k^{\times n}.$$
   Because $\bigcap_{\sigma \in \tS_m}\left(\operatorname{ball}\left(\bigoplus_{k=1}^m \mu_{\sigma\left(k\right)}\right),\varepsilon\right)^C$ is closed we have $\bigoplus_{j=1}^m\nu_j \in \bigcap_{\sigma \in \tS_m}\left(\operatorname{ball}\left(\bigoplus_{k=1}^m \mu_{\sigma\left(k\right)}\right),\varepsilon\right)^C$ and there exists no $\sigma \in \tS_m$ such that  $\nu_{\sigma(k)} = \mu_k$ for all $k$ so. Setting $\sP' = \sum_{k=1}^m a_k \delta_{\nu_k}$ we have that $\sP' \neq \sP$ but $V_n\left(\sP'\right) = V_n\left(\sP\right)$, a contradiction. \sloppy

\end{proof}

\section{General Version of Corollary \ref{cor:main} } \label{a:c1}

Here we present the general version of Corollary \ref{cor:main} which guarantees recovery of the true mixture components using our estimator for any mixture model, provided there are a sufficient number of samples per group. For a mixture model $p = \sum_{m=1}^M w_m^* p_m^*$, using the estimator $\hat{q}$ from Section \ref{arb_group_setup} to estimate (\ref{q_a}):
 \begin{equation*}
q(y_1, y_2, \dots, y_N) = \sum_{m=1}^M w^*_m p^*_m(y_1)p^*_m(y_2) \dots p^*_m(y_N), \quad y_1,y_2,\dots,y_N \in \R^d. 
\end{equation*} 
combining Theorem \ref{l3_arbitrary_group} and Theorem \ref{thm:compconv-maintext} gives the following result. 
\begin{corollary}
If $\sigma \to 0$ and $\frac{n\sigma^{2Nd}}{\log n} \to \infty$ as $n \to \infty$, and 
$p$ is $N$-identifiable (e.g. $N=2M-1$), then $\hat w_m \stackrel{a.s.}{\to} w_m^*$ and $\|p(\cdot; \hat \alpha_m) - p_m^*\|_1 \stackrel{a.s.}{\to} 0$, up to a permutation.
\end{corollary}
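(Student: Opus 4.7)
The plan is to obtain the corollary by directly chaining the two main results already established for arbitrary group size. First, I would invoke Theorem~\ref{l3_arbitrary_group}: under the hypotheses $\sigma \to 0$ and $n\sigma^{2Nd}/\log n \to \infty$, it delivers $\|q - q_{\hat w, \hat \a}\|_1 \xrightarrow{a.s.} 0$. By construction the estimator decomposes in the factored form $q_{\hat w, \hat \a} = \sum_{m=1}^M \hat{w}_m\, p(\cdot;\hat{\a}_m)^{\times N}$, while the target is $q = \sum_{m=1}^M w_m^* (p_m^*)^{\times N}$. Thus we have produced, almost surely, a sequence of $M$-component mixtures of $N$-fold products converging in $L^1$ to the true such mixture.

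Next I would apply Theorem~\ref{thm:compconv-maintext} pathwise. Fix a sample path $\omega$ lying in the probability-one event on which the $L^1$ convergence above holds. Because $p$ is $N$-identifiable by assumption, Theorem~\ref{thm:compconv-maintext} (equivalently Theorem~\ref{thm:compconv-supp}) supplies a sequence of permutations $\sigma_n = \sigma_n(\omega) \in S_M$ such that $\hat{w}_{\sigma_n(m)} \to w_m^*$ and $\|p(\cdot;\hat{\a}_{\sigma_n(m)}) - p_m^*\|_1 \to 0$ for every $m \in [M]$. Since this holds on a set of full probability, the corollary's almost-sure conclusion ``up to a permutation'' follows.

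The only conceivable obstacle is that the permutation $\sigma_n$ depends on the sample path, so strictly speaking one might worry about measurability of the reindexed sequences $\hat{w}_{\sigma_n(\cdot)}$ and $\hat{\a}_{\sigma_n(\cdot)}$. However, the corollary is stated precisely up to an unspecified permutation and involves a countable (indeed finite, per $n$) symmetry group $S_M$, so no additional measurable-selection argument is needed: for each $\omega$ in the full-measure event one simply picks any $\sigma_n(\omega)$ provided by Theorem~\ref{thm:compconv-maintext}. All the substantive work has already been carried out in Theorems~\ref{l3_arbitrary_group} and \ref{thm:compconv-maintext}, so the corollary is immediate from their composition.
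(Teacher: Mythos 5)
Your proposal is correct and is essentially the paper's own argument: the paper likewise obtains this corollary by combining Theorem~\ref{l3_arbitrary_group} (almost-sure $L^1$ consistency of $\hat q$ for group size $N$) with the pathwise application of Theorem~\ref{thm:compconv-maintext}. Your remark on the path-dependence of the permutations is a reasonable extra observation but does not change the route.
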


\end{appendices}

\end{document}